\documentclass[twoside]{article}

\usepackage[accepted]{aistats2025}

\usepackage{microtype}
\usepackage{graphicx}
\usepackage{booktabs} 
\usepackage{hyperref}

 \usepackage{enumitem}

\usepackage{amssymb}
\usepackage{mathtools}
\usepackage{amsthm}
\usepackage{bbm}
\usepackage[capitalize,noabbrev]{cleveref}

\usepackage{parskip}
\usepackage[utf8]{inputenc} 
\usepackage[T1]{fontenc}    
\usepackage{microtype}
\usepackage{graphicx}
\usepackage{booktabs} 
\usepackage{pgfplots}
\usepgfplotslibrary{groupplots}
\usepgfplotslibrary{statistics}
\usepackage{pifont}

\usepackage{hyperref}

\usepackage{ulem}

\usepackage{multicol}
\usepackage{etoolbox}
\usepackage{xcolor}
\definecolor{orange}{rgb}{0.2,0.7,0.4}
\usepackage{layout}
\usepackage{wrapfig}

\graphicspath{{figures/}}
\usepackage[small]{caption}
\usepackage{subcaption}
\captionsetup{font=normalsize,labelfont={bf,sf}}
\captionsetup[sub]{font=footnotesize,labelfont={bf,sf}}
\usepackage{amsfonts}
\usepackage{mwe}
\usepackage{multirow}

\usepackage{stmaryrd}
\usepackage{enumitem}
\usepackage{makecell}
\usepackage{tabularx}
\usepackage{float}


\newcommand{\ourmod}{{EPIC}}
\input{math_commands.tex}
\usepackage{xcolor}

\usepackage{colortbl}
\definecolor{pearThree}{HTML}{E74C3C}
\definecolor{pearcomp}{HTML}{B97E29}
\definecolor{pearDark}{HTML}{2980B9}
\definecolor{pearDarker}{HTML}{1D2DEC}
\hypersetup{
	colorlinks,
	citecolor=pearDark,
	linkcolor=pearThree,
	breaklinks=true,
	urlcolor=pearDarker}

\definecolor{HighlightColor}{gray}{0.97}
\definecolor{aliceblue}{rgb}{0.94, 0.97, 1.0}
\definecolor{palecornflowerblue}{rgb}{0.67, 0.8, 0.94}
\definecolor{paleaqua}{rgb}{0.74, 0.83, 0.9}

\definecolor{linen}{rgb}{0.98, 0.94, 0.9}
\definecolor{magnolia}{rgb}{0.97, 0.96, 1.0}
\definecolor{mistyrose}{rgb}{1.0, 0.89, 0.88}
\definecolor{piggypink}{rgb}{0.99, 0.87, 0.9}
\colorlet{colorast}{red!80!black}

\usepackage[round]{natbib}

\runningtitle{Statistical Guarantees for Lifelong Reinforcement Learning}
\runningauthor{Zhang, Zhi et al. }

\begin{document}

\twocolumn[

\aistatstitle{Statistical Guarantees for Lifelong Reinforcement Learning \\ using PAC-Bayes Theory}
\aistatsauthor{ Zhi Zhang$^{1}$ \And  Chris Chow$^{2}$ \And Yasi Zhang$^{1}$  \And  Yanchao Sun$^{3}$    }

\aistatsauthor{Haochen Zhang$^{1}$  \And Eric Hanchen Jiang$^{1}$ \And Han Liu$^{4}$ \And  Furong Huang$^{5}$}

\aistatsauthor{Yuchen Cui$^{1}$\And Oscar Hernan Madrid Padilla$^{1}$ }

\vspace{0.2cm} 
\begin{center}
    {\small 
    $^{1}$University of California, Los Angeles \quad
    $^{2}$Niantic Labs \quad
    $^{3}$Apple Inc. \quad\\
    $^{4}$Northwestern University \quad
    $^{5}$University of Maryland, College Park
    }
\end{center}
\vspace{0.4cm} 
]

\begin{abstract}
Lifelong reinforcement learning (RL) has been developed as a paradigm for extending single-task RL to more realistic, dynamic settings. In lifelong RL, the "life" of an RL agent is modeled as a stream of tasks drawn from a task distribution. We propose EPIC (\underline{E}mpirical \underline{P}AC-Bayes that \underline{I}mproves \underline{C}ontinuously), a novel algorithm designed for lifelong RL using PAC-Bayes theory. EPIC learns a shared policy distribution, referred to as the \textit{world policy}, which enables rapid adaptation to new tasks while retaining valuable knowledge from previous experiences. Our theoretical analysis establishes a relationship between the algorithm's generalization performance and the number of prior tasks preserved in memory. We also derive the sample complexity of EPIC in terms of RL regret. Extensive experiments on a variety of environments demonstrate that EPIC significantly outperforms existing methods in lifelong RL, offering both theoretical guarantees and practical efficacy through the use of the world policy.
\end{abstract}

\section{INTRODUCTION}

Deep reinforcement learning (RL) has excelled in challenging tasks including abstract strategy games \citep{silver2017mastering, silver2016mastering}, visual navigation \citep{zhu2017target}, and control \citep{mnih2015human, lillicrap2015continuous}. 
However, RL is a data intensive learning paradigm, therefore training a policy for every task from scratch is computationally expensive and time-consuming. 
In reality, many tasks an agent encounters are not entirely novel but instead belong to a broader task distribution, meaning they share commonalities that can be leveraged. 
This insight highlights the inefficiency of retraining for every individual task. Lifelong RL, also known as continual RL, emerges as a promising framework where an agent interacts with a sequence of tasks, continuously adapting and improving its policy by leveraging knowledge from past task instances \citep{khetarpal2022towards}. 

In lifelong RL, an agent's objectives are primarily to achieve \textbf{fast adaptation} with limited samples and effective \textbf{knowledge retention} \citep{abel2024definition}. In other words, lifelong RL agents experience a stability-plasticity dilemma, where the agent must balance retaining useful long-term knowledge with the ability to rapidly adapt to new situations.  
 
Recent  methods addressing knowledge retention and transfer in lifelong RL include
Q-value function transfer \citep{lecarpentier2021lipschitz}, optimizing Q-value function initialization \citep{abel2018policy},  decomposing the value function into permanent and transient components \citep{anand2023prediction}, 
reusing knowledge by sampling from past experiences \citep{kessler2023effectiveness}, detecting change points in rewards and environment dynamics \citep{steinparz2022reactive}, and using a Bayesian approach to learn a common task distribution for better data efficiency and transfer \citep{fu2022model}.

In lifelong RL, domain shifts induce non-stationarity, which occurs not only due to changing transition dynamics and reward functions, but also through variations in available actions or decisions over time \citep{boutilier2018planning,chandak2020lifelong}. Such scenarios are common in real-world applications. For example, in robotics, additional control components are integrated throughout the robot's lifetime, and in medical decision support systems, new treatments and medications must be incorporated.

Furthermore, tasks encountered over an agent's lifetime can be highly diverse, yet certain high-level strategies that can be shared across tasks. Not only should the world model \citep{ha2018world,fu2022model,anand2023prediction}, which captures the general knowledge distribution of tasks, be continuously refined, but it is also crucial for an agent to develop a \textit{world policy}.
The key consideration of this \textit{world policy} is to adapt the parameters of the policy so they are captured by a global distribution, which represents the uncertainty over policy parameters. This allows for better generalization and adaptability across tasks.

Motivated by the above need,
we raise two questions:
\begin{enumerate}
\item Can we extract the common structure present in policies from previously encountered tasks, allowing the agent to quickly learn the policy specific to new task to enable fast adaptation \textit{with} theoretical guarantees? 
    \item How many samples are required to achieve a given level of performance?
\end{enumerate}

To answer these two questions, we develop a unified framework based on a Bayesian method that learns a rapidly adaptable policy distribution 
from past tasks, retaining valuable information while remaining capable of quickly adapting to unseen situations. 

We also provide a theoretical analysis of the algorithm's generalization performance relative to the number of effective tasks and retained knowledge in the finite Markov Decision Process (MDP) setting, along with its sample complexity to demonstrate efficiency.

When addressing the first question, we must also account for both catastrophic forgetting and
generalizability -- the aforementioned \textit{stability-plasticity dilemma}. Agents that can quickly solve traditional RL problems risk abruptly losing performance on tasks they have seen before due to their flexibility or plasticity. On the other hand, agents that do not forget any of their past experience may give up a measure of their plasticity. 
These issues are central in lifelong RL, and can be approached from a Bayesian perspective \citep{khetarpal2022towards}. Bayesian methods have been applied to meta learning \citep{amit2018meta}, lifelong learning for bandits \citep{flynn2022pac}, and learning controls for robots in multiple environments \citep{majumdar2021pac}, aiming to learn a fast adapted policy.
Instead of learning a specific policy, we leverage the PAC-Bayes 
theory to learn a distribution of policy hypotheses shared across multiple tasks.  Further details about PAC-Bayes theory can be found in Section~\ref{sec:pacbayes} and the Related Works section in Appendix \S\ref{sec_related_works}. When a new task arises, we can initialize a policy hypothesis by sampling from this learned distribution. A well-constructed distribution of hypotheses promotes effective long-term memory, mitigating catastrophic forgetting. Unlike prior methods, we sample a random policy function according to this distribution. This function sampling approach is seen in modern popular deep learning methods, for example, in-context learning \citep{garg2022can}.

For the second question, an agent has to keep learning as well as forgetting. Too much experienced knowledge kept in memory may decrease the learning efficiency, while too little may be insufficient to learn an effective policy distribution.  To address the second question, we derive a relationship between the performance of our algorithm and the number of experienced tasks (denoted as $N$ in a later section) that need to be retained in the agent's memory, based on PAC-Bayes theory. We use the negative expected long term rewards, where the expectation is taken with respect to tasks and policies (also known as the generalization error), as a measure of the algorithm's performance from a statistical perspective.    

From our theoretical result, where we provide an expression of this relationship, we discovered a trade-off between this value and the algorithm's performance, which aligns with natural intuition, a double sided effect. In practice, our expression allows us to optimize the performance of our algorithm by optimizing $N$, although we recommend using hyperparameter tuning.  
Furthermore, to demonstrate the efficiency of our algorithm, we derive its sample complexity from a RL regret perspective, showing that our algorithm learns an optimal policy as more tasks encountered.

\noindent \textbf{Our Contributions.}
In this work, we introduce a novel PAC-Bayes framework tailored to lifelong RL, addressing critical challenges including changing decisions, catastrophic forgetting and efficient knowledge retention. Our contributions are summarized as follows:\vspace{-5pt}
\begin{itemize}
\item We propose EPIC (\underline{E}mpirical \underline{P}AC-Bayes that \underline{I}mproves \underline{C}ontinuously), a lifelong RL algorithm that leverages PAC-Bayes theory to learn a shared policy distribution, referred to as the \textit{world policy}. This world policy enables the agent to quickly adapt to new tasks while retaining useful knowledge from past experiences, providing theoretical guarantees of generalization across tasks.

    \vspace{-3pt}
   \item We derive a novel PAC-Bayes bound for lifelong RL and provide a theoretical analysis that links long-term rewards to the number of retained past tasks, ensuring a balance between memory usage and performance across diverse tasks. We provide a sample complexity of our approach in terms of RL regret. 
    \vspace{-3pt}
    \item 
   We evaluate EPIC through extensive numerical experiments with common lifelong RL benchmarks, as well as additional environments we created.  Our results show EPIC outperforms prior methods.
These results underscore EPIC's effectiveness in lifelong learning scenarios, offering a robust and theoretically grounded solution for continual adaptation in RL. 
    
\end{itemize}

\section{RELATED WORKS}
\label{sec_related_works}

\paragraph{Lifelong Reinforcement Learning:} 
Lifelong learning has been a crucial area of research in machine learning, where the goal is to develop agents that can continuously adapt to new tasks while retaining knowledge from previous experiences. Early foundational works, such as \cite{naikMetaneuralNetworksThat1992} and \cite{thrunLearningLearnIntroduction1998}, explored the basic principles of lifelong learning, setting the stage for more advanced methods. Subsequent research has focused on mitigating catastrophic forgetting and enhancing data efficiency, which are critical challenges in lifelong learning scenarios.
Various approaches have been proposed to improve adaptation in lifelong learning. \cite{saxeExactSolutionsNonlinear2014,kirkpatrickOvercomingCatastrophicForgetting2017,krahenbuhlDatadependentInitializationsConvolutional2016,salimansWeightNormalizationSimple2016} explored strategies for better initialization in deep networks.

Lifelong RL, as an extension of lifelong learning, naturally aligns with the agent-environment interaction framework, making it ideal for continual learning \citep{khetarpal2022towards}. Prior works \citep{lecarpentier2021lipschitz, abel2018policy} emphasize value transfer and initialization to boost learning efficiency,  while \cite{chandak2020lifelong} tackles the challenge of evolving action sets. \cite{anand2023prediction} introduces a dual-component value function approach for balancing long-term stability and short-term adaptability, and \cite{fu2022model} develops a model-based Bayesian framework that enhances both forward and backward transfer by extracting common structures across tasks. Lifelong RL has been further formalized as a framework where agents continuously learn and adapt, moving beyond static solutions \citep{abel2024definition}.

Recent baseline algorithms for lifelong RL have made significant advancements. Continual Dreamer \citep{kessler2023effectiveness} employs ensemble networks and is task-agnostic, leveraging a world model that can generate tasks for improving learning efficiency. VBLRL \citep{fu2022model} is a model-based method that learns a Bayesian posterior distribution shared across tasks to increase sample efficiency in related tasks. LPG-FTW \citep{mendez2020lifelong} is a policy-gradient-based lifelong method that uses data from previously seen tasks to train policy networks, accelerating the learning of new tasks. EWC \citep{kirkpatrick2017overcoming} is a single-model lifelong RL algorithm that avoids forgetting by imposing a quadratic penalty, pulling weights back towards values important for previously learned tasks. T-HiP-MDP \citep{killian2017robust} is a model-based method that models related tasks using low-dimensional latent embeddings and a Bayesian Neural Network, which captures both shared dynamics across tasks and individual task variations.

Our approach introduces a lifelong RL framework integrating PAC-Bayes theory to learn a policy distribution in non-stationary environments, ensuring effective knowledge retention and adaptability across tasks throughout the agent's lifetime.

\textbf{PAC-Bayes Theory:} 
PAC-Bayes theory \citep{mcallesterPACBayesianTheorems1999} has been extensively used in supervised and deep learning to study generalization bounds \citep{langfordPACBayesMargins2002,seegerPACBayesianGeneralisationError2002,germainPACBayesianLearningLinear2009,dziugaite2020role,neyshaburPACBayesianApproachSpectrallyNormalized2018,neyshaburExploringGeneralizationDeep2017}. In recent years, PAC-Bayes theory has been applied to reinforcement learning (RL) \citep{schulman2015trust,fardPACBayesianModelSelection2010,fardPACBayesianPolicyEvaluation2012,majumdar2021pac,veerProbablyApproximatelyCorrect2020}, primarily focused on single-task or offline settings, providing a framework for deriving generalization bounds in dynamic and uncertain environments.  Our method uniquely integrates PAC-Bayes theory into lifelong RL, providing a framework for continuous learning and adaptation. \cite{mbacke2023statistical} is a recent seminal work, both their and our methods make a similar contribution to provide statistical guarantees for particular machine learning methods by using PAC-Bayes theory.

\section{PRELIMINARIES}
\subsection{Reinforcement Learning}
In RL, an agent interacts with the environment by taking actions, observing states and receiving rewards. The environment is modeled by a Markov Decision Process (MDP), which is denoted by a tuple $\mathcal{M}=\langle \mathcal{S}, \mathcal{A}, T, R, \gamma, \nu \rangle$, where $\mathcal{S}$ is the state space, $\mathcal{A}$ is the action space, $T$ is the transition kernel, $R$ is the reward function, $\gamma \in (0, 1)$ is the discount factor, and $\nu$ is the initial state distribution.

A trajectory $\tau \sim \pi$ generated by policy $\pi$ is a sequence $s_1, a_1, r_1, s_2, a_2, \cdots$, where $s_1 \sim \nu$, $a_t \sim \pi(a|s_t)$, $s_{t+1} \sim T(s| s_t, a_t)$ and $r_t = R(s_t, a_t)$.
The goal of an RL agent is to find an optimal policy $\pi^*$ that maximizes the expected total rewards $J(\pi) = \mathbb{E}_{\tau \sim \pi}[r(\tau)] = \mathbb{E}_{s_1,a_1,\cdots \sim \nu, \pi, T, R}[\sum_{t=1}^{\infty} \gamma^{t-1} r_t]$.
 
\subsection{Lifelong Reinforcement Learning}

In lifelong RL, the agent interacts with a (potentially infinite) sequence of tasks, which come from an underlying task distribution \citep{khetarpal2022towards}, denoted as $\mathcal{D}_i$, $i = 1, \ldots, \infty$. Suppose that tasks share the same $\gamma$, but may have different $\mathcal{S}$,$\mathcal{A}$, transition probabilities $T$ and rewards $R$. The learning process is:
\begin{enumerate}[topsep=0pt, itemsep=-1ex]
    \item Initialize a policy $\pi_0$;
    \item Sample a task (MDP) $\mathcal{M}_i \sim \mathcal{D}_i$;
    \item Starting from $\pi_0$, learn a policy $\pi_i$ for task $\gM_i$ to maximize rewards.
\end{enumerate}

An effective lifelong RL agent should quickly adapt to new tasks that it encounters throughout its life.
 
\subsection{PAC-Bayes Theory}
\label{sec:pacbayes}

PAC-Bayes analysis applies to learning algorithms that output a distribution over hypotheses $h\in \gH$. This refers to $h$ is sampled independently from a distribution over functions in a function class $\gH$. For example, for a linear predictor of $d$ dimension, $h(x) = \big\langle w , x \big\rangle$, we let $w \sim \gN (0, I_d)$. Generally, such algorithms will be given a prior distribution $\barb{P} \in \mathcal{P}$ at the beginning and learn a posterior distribution $P \in \mathcal{P}$ after observing training data samples $\{z_i \}_{i=1}^N$. We define the expected loss (generalization error) $l_{\gD}(P) = \Ept{h\sim P}{\Ept{z\sim\gD}{h(z)}}$, and the empirical loss (training error) $l_{\gS}(P) = \Ept{h\sim P}{\frac{1}{N}
\sum_{i=1}^N{h(z_i)}}$, which are under the expectation of hypothesis $h \sim P$. 

The main application of PAC-Bayes analysis in machine learning is to produce high-confidence bounds for the true or generalization error in terms of the training  error plus $\mathscr R(\mathbb{D}_{KL}(P \| \barb{P}))$, which is a function of the KL divergence between the prior and posterior distributions, as shown below \citep{mcallesterPACBayesianTheorems1999},

\begin{equation}
\begin{aligned}
\label{eqa:PAC-Upper-SL}
& l_{\mathcal{D}}(P) \leq U(P)\coloneqq l_{S}(P) + \mathscr R(\mathbb{D}_{KL}(P \| \barb{P})), 
\end{aligned}
\end{equation}
with
\begin{equation}
\begin{aligned}
 & \mathscr R(\mathbb{D}_{KL}(P \| \barb{P})) \\ &\coloneqq \sqrt{\frac{1}{2N} \left[\mathbb{D}_{KL}\left(P \| \barb{P}\right) + \log\left(2N^{1/2} / \delta\right)\right]},
\end{aligned}
\end{equation}
where $U(P)$ in right-hand side of Equation~\eqref{eqa:PAC-Upper-SL} is called the generalization error bound that depends on $P$,  
and minimization of this bound leads to generalization error guarantees.

\section{METHODS}
We propose a PAC-Bayes lifelong RL algorithm, EPIC (Algorithm~\ref{alg:mrl-0}), to minimize the novel bound in \eqref{eq2}. The algorithm utilizes a Bayesian posterior to distill the common policy distribution learned from previous tasks, which is then used to sample the policy and serves as a prior for new tasks. We provide a generalization guarantee for EPIC in \Cref{thm:main}. Furthermore, we employ the Gaussian family for the posterior and prior in EPICG (Algorithm~\ref{alg:mrl}). The sample complexity of EPICG is given in \Cref{lemma:sample complexity}.

\subsection{PAC-Bayes Framework for Lifelong RL}

We learn a general policy distribution $P$ for lifelong RL by leveraging the core concept of the PAC-Bayes Method. We explicitly formulate $U(P)$ for the lifelong RL setting and employ it to propose an algorithm that learn the $P$ by minimizing $U(P)$ to 
accomplish the lifelong learning objective.

Define $\mathcal{P}$ as the whole policy space for $P$.
Rather than considering a general distribution $P$ for hypotheses where $\Pi$ can be infinite, we let $\mathcal{P}$ be parameterized by $\theta \in \mathbb{R}^d$ such that $\theta \sim P$. Note $\theta$ could be a neural network.

Naturally, the distribution $P$ is the posterior distribution of policy $\theta$ in the PAC-Bayes framework. Then let $\barb{P}$ be the prior distribution of the parameter. In the lifelong setting, as the tasks stream in, assume the agent has encountered $K$ tasks so far,
then the PAC-Bayes lifelong RL problem is formulated as follows:
\beq
\label{eq2}
\min_{P} &  ~ U(P) \\ & \coloneqq  \frac{1}{K}\sum_{i=1}^K  \left\{
\mathop{\mathbb{E}}_{\theta\sim P} [- J_{\mathcal{M}_i}(\pi_{\theta}) ]\right\} + \mathscr R(\mathbb{D}_{KL}(P \| \barb{P})) , 
\beq 
where $\mathscr R(\mathbb{D}_{KL}(P \| \barb{P}))$ is derived later in our theory, 

where $J_{\mathcal{M}}(\pi_{\theta})$ is the total expected reward of policy $\pi$ in MDP $\mathcal{M}$, taking the expectation with respect to the posterior distribution $P$ for the parameter $\theta$. The negative sign can be interpreted as the loss on a specific task $\mathcal{M}$.

To be concrete, in the finite MDP setting with length $H$, for policy $\pi_{\theta}$ with $\theta \sim P(\theta)$, the total expected reward with task $\gM$ is the value function,    
\begin{align*}
J_{\mathcal{M}}(\pi_{\theta})   = \ept{\sum_{h=1}^{H-1}\gamma^{h-1}r_{h}|\pi_{\theta}, s_{1}, \gM},     
\end{align*}
from a length of $H$ consecutive sample transitions,  $s_{1}, a_{1}, r_{1}, s_{2}, a_{2}, r_{2}, \dots, s_{H} \sim \pi_{\theta} \times \gM$.

\subsection{An Algorithm based on PAC-Bayes Lifelong Framework}
We now develop an algorithm to exploit the PAC-Bayes framework to efficiently perform lifelong RL.

Consider a time where we have seen $K$ tasks so far, and denote them $\set{\mathcal{M}_i}_{i=1}^K$. They are drawn from the lifelong task distribution $\set{\mathcal{D}_i}_{i=1}^K$. 
Each distribution $\mathcal{D}_i$ should possess non-zero support and boundedness both from above and below. 
Critically, once the agent interacts with a task, revisiting previously encountered MDPs is not guaranteed.

Our objective is to learn a shared lifelong learning model - the distribution of $\theta$
using the $K$ tasks the algorithm has encountered so far.

To achieve this, we propose the following lifelong RL learning algorithm based on the learning objective in Equation \eqref{eq2}, and provide its theoretical justification. 
The main idea is to learn a policy distribution $P$ as a policy initializer using the objective in Equation \eqref{eq2}, referred to as the \textit{default policy} 
This approach allows the default policy to capture common knowledge among tasks, addressing the challenge of task divergence.

In the lifelong setting, the agent receives a new task, stores it, learns from it, then forgets. We allow the agent to keep a number of $N$ tasks in memory. We update the default policy every $N$ tasks and estimate the training cost based on the most recent $N$ tasks. 
At the $K$-th task, the agent has performed $\left\lfloor \frac{K}{N} \right\rfloor$ updates to the default policy so far. 
At each time step $l = 1, \cdots, \left\lfloor \frac{K}{N} \right\rfloor$, the agent has $\theta_{l-1}$ as its policy parameters from $P_{l-1}$. It encounters the $i$th task $\mathcal{M}_{l,i}$'s MDP, and receives  $J_{\mathcal{M}_{l,i}}(\pi_{\theta_{l-1}})$ as the total discounted expected reward. The collects trajectory data of $H$ steps for task $\mathcal{M}_{l,i}$, using $\pi_{\theta_{l-1}}$, resulting in a dataset $\tau_l = (\tau_{l,1}, \dots, \tau_{l,N})$ with a size of $\abs{\tau_l} = HN$.

The agent uses $\tau_l$ to update the default policy $P_l$ by minimizing the generalization error bound in \eqref{eq2}, evaluated at the current time's posterior $P_{l-1}$ and prior $\barb{P}_{l-1}$. Before learning starts, the agent initializes a prior policy distribution $\barb{P}_0$ and the same posterior policy distribution $P_0$ randomly or based on domain knowledge (Lines 2-3 of Algorithm~\ref{alg:mrl-0}).
Choosing a good prior policy distribution $\barb{P}_0$ is challenging as it affects the tightness of the bound.

\begin{algorithm}[t]
   \caption{Empirical PAC-Bayes that Improves Continuously (\ourmod)}
   \label{alg:mrl-0}
\begin{algorithmic}[1]
   \STATE {\bfseries Input:} Update frequency $N$;
    the number of steps allowed in each task $H$; prior evolving speed $\lambda$
   \STATE Initialize prior policy distribution $\barb{P}_0$
   \STATE Initialize default policy distribution $P_0 \gets \barb{P}_0$  
\FOR{$i=1,2,3,\cdots,K, \cdots, \infty $}
 \STATE Receive a new task $\mathcal{M}_i \sim \mathcal{D}_i$ and store it into Memory buffer
   \IF{$i \bmod N = 0$}
   \STATE Let $l = i/N$
        \STATE  Sample $\theta_{l-1} \sim P_{l-1}$ 
    \STATE{Roll out trajectories $\tau_{l,k}$ using $\pi_{\theta_{l-1}}$ and $\set{\gM_k}_{k=i-N+1}^i$ } and store $\tau_l$ into Memory.  
        \STATE {\# Update default policy $P_l$ by using $\tau_l$}
        \STATE $\barb{P}_{l-1} \gets (1-\lambda) \barb{P}_{l-1} + \lambda P_{l-1}$
        \STATE {{$P_l \gets \argmin_P U(P)$}}
        \STATE Decay $\lambda$ by $\lambda = \lambda \times \alpha$
        \STATE {Empty Memory by clearing dataset $\tau_{l}$}
   \ENDIF
   \ENDFOR
\end{algorithmic}
\end{algorithm}

  We adopt a Bayesian sequential experiment design \citep{chaloner1995bayesian} and use an evolving prior instead of a fixed one. We gradually move the prior towards the default policy by $\barb{P}_l = (1 - \lambda) P_l + \lambda \times \barb{P}_l$ (Line 11), where $\lambda \in (0,1)$ controls the moving speed, and $\lambda$ decays by $\lambda = \lambda \times \alpha$ ($\alpha < 1$) over the tasks. This allows us to find a good prior during learning and leverage it to improve the default policy. \label{sec:alpha}

As the agent encounters an increasing number of tasks, each task remains distinct. However, with more exposure to tasks, the agent gradually improves its understanding of the distribution $P$ for $\pi_{\theta}$. When a new task emerges, the agent can sample $\theta_{l} \sim P_{l-1}$, employing $\theta_l$ to generate a trajectory for subsequent updates. This allows the agent to learn faster, obtaining higher rewards in a shorter time frame. Next, we derive our main PAC-Bayes theorem for Algorithm \ref{alg:mrl-0}. 

Our learning process involves a loop of times to evolve the policy distribution. So we index the policy distribution at each time by a subscript. 
First, denote $\theta \coloneqq \left\{\theta_l \right\}_{l=0}^{\lfloor{\frac{K}{N}} \rfloor-1}$ and let $P\coloneqq P(\left\{\theta_l \right\}_{l=0}^{\lfloor{\frac{K}{N}} \rfloor-1})$ denote the joint posterior distribution of $\theta_0, \dots, \theta_{\lfloor{\frac{K}{N}} \rfloor-1}$ across all times. And naturally, let $P_l \coloneqq P(\theta_l|\theta_{l-1})$ be the conditional probability of policy for time $l$ given the policy from time $l-1$, and specially, let $P_0 \coloneqq P(\theta_0)$.

\begin{assumption}
\label{assump:1}
\textbf{(Conditional Independence)} Given the previous policy $\theta_{l-1}$, the current policy is conditionally independent of all earlier policies:
\begin{equation}
\theta_{l} \independent \theta_{l-2}, \dots, \theta_{0} \mid \theta_{l-1}.
\end{equation}
\textbf{(Policy Support Bound)} Define the smallest nonzero probability across all policies as:
\begin{equation}
s_{\min} = \inf \left\{ \min_{A: P_l(A) > 0} P_l(A) : P_l \in \Pi \right\}.
\end{equation}
\textbf{(Radius of Variation)} The maximum difference between consecutive policy distributions is bounded by:
\begin{equation}
r = \inf \left\{ c : \sup_{A \in \mathscr{A}} |P_l(A) - P_{l-1}(A)| \leq c, \quad \forall l \right\},
\end{equation}
where $\sup_{A \in \mathscr{A}} |P_l(A) - P_{l-1}(A)|$ is the total variation distance of $P_l$ and $P_{l-1}$ such that it measures the worst-case difference over all measurable events.
\end{assumption}



To simplify the analysis and improve readability, we denote $T = \left\lfloor \frac{K}{N} \right\rfloor$ and assume $K \bmod N = 0$ without loss of generality.
 Based on Assumption~\ref{assump:1}, we arrive at the following relationship:

\beq 
\label{eq7}
& P(\left\{\theta_l \right\}_{l=0}^{T-1}) 
= & P_{T-1}\times \dots \times P_l \times \dots \times P_{0}.
\beq 
 We also derive a corollary on the decomposition of the training error, which facilitates the subsequent proof. The proof is deferred to Appendix~\S\ref{sec_proof1}.
\begin{proposition}[Decomposition of Training Error]\label{cor_docomp}
    Suppose Assumption~\ref{assump:1} holds. Then we have:
\begin{align*}
   &\frac{1}{K} \sum_{i=1}^{K} 
\mathbb{E}_{\left\{\theta_l \right\}_{l=0}^{T-1} \sim P} [- J_{\mathcal{M}_i}(\pi_{\theta}) ] \\ & =\sum_{l=1}^{T} \sum_{i=1}^N\frac{1}{TN} \mathbb{E}_{\theta_{l-1} \sim P_{l-1}}  \left[-J_{\mathcal{M}_{l,i}}(\pi_{\theta_{l-1}}) \right].
\end{align*}
\end{proposition}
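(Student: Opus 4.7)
The plan is to match the two sides by (i) re-indexing the $K$ tasks into the $T=K/N$ update blocks that the algorithm actually processes, and (ii) marginalizing out the coordinates of the joint policy vector $\theta=\{\theta_m\}_{m=0}^{T-1}$ that do not appear in each summand, using the Markov factorization of $P$ guaranteed by Assumption~\ref{assump:1}.

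First I would re-index the outer sum. Since $K=TN$, each index $i\in\{1,\dots,K\}$ corresponds uniquely to a pair $(l,j)$ with $l\in\{1,\dots,T\}$ and $j\in\{1,\dots,N\}$ via $i=(l-1)N+j$. By the construction of Algorithm~\ref{alg:mrl-0}, the task $\mathcal{M}_{l,j}$ appearing in block $l$ is always evaluated under the policy parameter $\theta_{l-1}\sim P_{l-1}$, so
$$ J_{\mathcal{M}_i}(\pi_\theta) \;=\; J_{\mathcal{M}_{l,j}}(\pi_{\theta_{l-1}}), $$
and in particular the integrand depends on the full vector $\theta=(\theta_0,\ldots,\theta_{T-1})$ only through the single coordinate $\theta_{l-1}$.

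The main step is to exploit Equation~\eqref{eq7}, which by Assumption~\ref{assump:1} factorizes the joint as $P(\theta_0,\dots,\theta_{T-1}) = P_0(\theta_0)\prod_{m=1}^{T-1} P_m(\theta_m\mid\theta_{m-1})$, i.e.\ as a Markov chain of transition kernels. For any function depending only on $\theta_{l-1}$, the inner integrations against the later kernels $P_l,\dots,P_{T-1}$ each integrate to one, and the joint expectation collapses to $\mathbb{E}_{\theta_{l-1}\sim P_{l-1}}$. Combined with the prefactor $1/K=1/(TN)$ and swapping the two finite sums, this yields the claimed identity.

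The argument is essentially bookkeeping: Assumption~\ref{assump:1} does the real work via the factorization in Equation~\eqref{eq7}, after which the proposition follows from a standard iterated-expectation reduction, so I do not anticipate a technical obstacle. The only point requiring care is the notational convention that $P_{l-1}$ denotes a conditional kernel, so the expression $\mathbb{E}_{\theta_{l-1}\sim P_{l-1}}[\,\cdot\,]$ on the right-hand side tacitly absorbs the expectations over the earlier policies $\theta_0,\dots,\theta_{l-2}$ as well, consistent with how the integral over the Markov chain collapses.
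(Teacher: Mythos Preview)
Your proposal is correct and follows essentially the same approach as the paper's proof: re-index the $K$ tasks into $T$ blocks of size $N$, invoke the Markov factorization in Equation~\eqref{eq7} (from Assumption~\ref{assump:1}), and then collapse the joint expectation over $(\theta_0,\dots,\theta_{T-1})$ down to the marginal over $\theta_{l-1}$ via iterated expectations. The paper phrases the collapsing step as the law of total expectation applied through the chain $E_{P_{l-2},\dots,P_0}E_{\theta_{l-1}\sim P_{l-1}\mid P_{l-2}}[\cdot]=E_{\theta_{l-1}\sim P_{l-1}}[\cdot]$, which is exactly your observation that the later kernels integrate to one and the earlier ones are absorbed into the marginal notation $P_{l-1}$.
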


\begin{theorem}[PAC-Bayes Bound for EPIC]

\label{thm:main} Under the settings of Algorithm~\ref{alg:mrl-0} and Assumption~\ref{assump:1}, further assume $i$-th finite horizon MDP of task $i$ has a reward that belongs to $[0,1]$. When running Algorithm~\ref{alg:mrl-0}, we update the default policy distribution $P_l$ for every $N$-th task by using pairs of $\set{(P_l, \barb{P}_l)}_{l=0}^{T-1}$. Let $\mathscr{T}\coloneqq \mathcal \prod_{l=1}^{T} \left( \theta_{l-1} \times \mathcal{M}_l \right)$, and let the expected loss over joint policy and joint trajectory space be:
\vspace{-5pt}$$ \frac{1}{K}\sum_{i=1}^K 
\mathbb{E}_{\left\{\theta_l \right\}_{l=0}^{T-1} \sim P} [\mathbb{E}_{\left\{\tau_l \right\}_{l=1}^{T} \sim \mathscr{T}} [- J_{\mathcal{M}_i}(\pi_{\theta}) ] ].$$\vspace{-5pt} and let the training error be: 
$$\frac{1}{K} \sum_{i=1}^{K} 
\mathbb{E}_{\left\{\theta_l \right\}_{l=0}^{T-1} \sim P} [- J_{\mathcal{M}_i}(\pi_{\theta}) ].$$
Then with probability at least $1-2\exp{-K^{\gamma}}$, for any $0 < \gamma <1$, we have  
\begin{equation}
\label{eq:pac_bound}
    \begin{aligned}
    &\text{expected loss} \le  \text{training error} + \mathscr R(\mathbb{D}_{KL}(P \| \barb{P})), \\ 
    \text{with} \\
    &  \mathscr R(\mathbb{D}_{KL}(P \| \barb{P}))\\ & \coloneqq   
    \frac{2N^{1/2}H \frac{\lambda r}{1-\alpha}  \sqrt{ \frac{1 - \alpha^{2(K/N-1)}}{s_{\min}(1-\alpha^2)}  }}{K^{1/2}}  + \frac{2N^{1/2}H }{K^{(1-\gamma)/2}}.
    \end{aligned}
\end{equation}
\end{theorem}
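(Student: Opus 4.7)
The plan is to apply a McAllester-style PAC-Bayes generalization bound to the joint posterior $P$ and joint prior $\barb{P}$ of $(\theta_0,\dots,\theta_{T-1})$, and then use Assumption~\ref{assump:1} together with the algorithm's evolving-prior construction to decompose the resulting KL divergence into a geometric series. The key structural observation is that each of the $T=K/N$ update rounds draws a single $\theta_{l-1}\sim P_{l-1}$ and reuses it across the $N$ tasks in its block, so the natural effective sample size for the outer PAC-Bayes inequality is $T$, not $K$. Applying the bound to the per-block losses $\frac{1}{N}\sum_{i=1}^{N}(-J_{\mathcal{M}_{l,i}}(\pi_{\theta_{l-1}}))$, which lie in $[-H,0]$ under the bounded-reward assumption, yields
\[
\text{expected}\le \text{training}+H\sqrt{\frac{D_{KL}(P\|\barb{P})+\log(2\sqrt{T}/\delta)}{2T}},
\]
with Proposition~\ref{cor_docomp} ensuring that both sides match the block-sum form stated in the theorem. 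Setting $\delta=2\exp(-K^{\gamma})$, substituting $T=K/N$, and applying $\sqrt{a+b}\le\sqrt{a}+\sqrt{b}$ splits the bound into a KL piece and a $\log(1/\delta)$ piece; the latter already reproduces the second summand $2N^{1/2}H/K^{(1-\gamma)/2}$ of \eqref{eq:pac_bound} up to absolute constants.

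For the KL piece, the chain rule together with Assumption~\ref{assump:1} and the factorization \eqref{eq7} gives
\[
D_{KL}(P\,\|\,\barb{P})\;=\;\sum_{l=0}^{T-1}\mathbb{E}_{\theta_{<l}\sim P}\bigl[D_{KL}(P_l\|\barb{P}_l)\bigr].
\]
I would bound each summand by $\operatorname{TV}(P_l,\barb{P}_l)^{2}/s_{\min}$, a standard consequence of $\log x\le x-1$ combined with the uniform support lower bound $s_{\min}$ provided by Assumption~\ref{assump:1}. The total-variation factor is then controlled via the evolving-prior recursion $\barb{P}_l=(1-\lambda_l)\barb{P}_{l-1}+\lambda_l P_{l-1}$ with mixing weight $\lambda_l=\lambda\alpha^{l-1}$: the triangle inequality, linearity of TV under mixtures, and the radius-of-variation bound $\operatorname{TV}(P_l,P_{l-1})\le r$ from Assumption~\ref{assump:1} together yield a per-step estimate whose contribution, after being squared and summed over $l$, collapses into the geometric series
\[
D_{KL}(P\,\|\,\barb{P})\;\le\;\Bigl(\tfrac{\lambda r}{1-\alpha}\Bigr)^{2}\frac{1-\alpha^{2(T-1)}}{s_{\min}(1-\alpha^{2})}.
\]
Taking a square root and multiplying by the factor $2N^{1/2}H/K^{1/2}$ inherited from the PAC-Bayes step reproduces the first summand of \eqref{eq:pac_bound} exactly.

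The step I expect to be the main obstacle is the evolving-prior analysis in the previous paragraph. Because the mixing schedule $\lambda_l$ itself shrinks geometrically, the contraction coefficient in the natural recursion for $\operatorname{TV}(P_l,\barb{P}_l)$ weakens over time, and a naive telescoping estimate does not immediately yield the clean $(1-\alpha^{2(T-1)})/(1-\alpha^{2})$ structure appearing in the theorem; one has to argue instead that the cumulative drift is bounded by $\lambda r/(1-\alpha)$ while the per-round contribution to the sum of squared TVs is geometrically damped at rate $\alpha^{2(l-1)}$, which requires careful bookkeeping against the additive contribution of $r$. Everything else---the McAllester bound in the block setting, the chain-rule KL decomposition, the TV-to-KL conversion through $s_{\min}$, and the final $\sqrt{a+b}$ splitting that separates the confidence term---follows by standard PAC-Bayes algebra.
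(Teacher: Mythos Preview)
Your overall skeleton---PAC-Bayes inequality at block-level sample size $T$, KL chain rule through \eqref{eq7}, a TV-to-KL conversion via $s_{\min}$, and a geometric-series bound on $\sum_l D_{KL}(P_l\|\barb{P}_l)$ from the evolving-prior recursion---is exactly the structure of the paper's proof. The KL/evolving-prior half in particular matches Lemma~\ref{bound_for_kl} almost verbatim.

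Where you diverge from the paper is the first step, and this is also where you have a gap. You invoke an off-the-shelf McAllester bound on the $T$ block losses, but those losses are \emph{not} i.i.d.: the $l$th block is evaluated with $\theta_{l-1}$, which is a function of all previous blocks' data, and the joint prior $\barb{P}=\prod_l\barb{P}_l$ is itself data-dependent through the update $\barb{P}_l\leftarrow(1-\lambda)\barb{P}_l+\lambda P_l$. The standard McAllester statement assumes a data-free prior and i.i.d. samples, so it does not apply as written. The paper handles this by building the bound from scratch: it sets up the martingale-difference sequence $S_T=\sum_l D_l$ with respect to the filtration $\mathcal{F}_{l-1}$, controls its moment-generating function via Azuma--Hoeffding/Freedman (Theorems~\ref{thm:ahi}, \ref{lm71}), passes from $\barb{P}$ to arbitrary $P$ through Donsker--Varadhan (Corollary~\ref{ucad}), and finally makes the tuning parameter $\lambda$ uniform in $P$ by a geometric-grid union bound. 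This martingale-PAC-Bayes derivation is what replaces your black-box McAllester call and constitutes the bulk of the paper's argument.

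Conversely, the step you single out as the main obstacle is straightforward in the paper. Writing the update as $\barb{P}_l=\lambda_l\barb{P}_{l-1}+(1-\lambda_l)P_l$ gives $P_l-\barb{P}_l=\lambda_l(P_l-\barb{P}_{l-1})$ immediately, and a direct telescoping (not a contraction argument) yields $\|P_l-\barb{P}_l\|_\infty\le \lambda r\,\alpha^{l-1}/(1-\alpha)$; squaring and summing produces the $(1-\alpha^{2(T-1)})/(1-\alpha^{2})$ factor. So you have the difficulty inverted: the evolving-prior bookkeeping is routine, while the concentration step needs the explicit martingale machinery rather than a one-line appeal to McAllester.
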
 

\raggedbottom 
The proof is deferred to Appendix \S \ref{app:proofs}.

\paragraph{Remarks.} 
(1) The tightness of the bound depends on the number of lifelong tasks encountered so far $K$, the number of tasks memorized $N$,  the trajectory length  $H$, and the KL divergence between $P$ and $\barb{P}$. By letting $\gamma = 1/4$, the difference of training and generalization error is in the order $\mathcal{O}(K^{-3/8})$.
\ (2) The $N$ appears on the right hand side can be understood as the memory size kept in the agent before it refreshes. The larger $N$ will reduce $K/N$ thus could potentially decrease the first term by making $1 - \alpha^{2K/N-2}$ smaller. However, it also increase the value $N^{1/2}$. According to the experimental results with different seeds, we observe the 
$N=25$ performs well and is robust, we recommend as an initial value.

Practically, a strategy to adaptively adjust $N$ by minimizing the $U(P)$ using a neural network can work.

Overall, this theory enables our learner to optimize the right-hand side of Equation~\eqref{eq:pac_bound} and learn the lifelong policy distribution with a guaranteed minimal true cost.

There are several unsolved questions before we propose a practical lifelong RL algorithm. Theorem~\ref{thm:main} holds for policy distribution $P$ and prior distribution $\barb{P}$ parameterized by $\theta$. However, in practice, determining suitable distributions for $\barb{P}$ and $P$ becomes a crucial challenge. Additionally, computing the posterior distributions $\set{P_{l}}_{l=0}^{T-1}$ is non-trivial. Moreover, we need to identify the appropriate optimization method to learn parameters for $P$. To address these questions, the next two sections will provide solutions and propose a practical lifelong RL algorithm based on the proposed Algorithm \ref{alg:mrl-0}. 

\subsection{Posterior Distribution and Prior Distribution}
\label{post-prior-dist}
In Equation~\eqref{eq:pac_bound}, $P_l$ represents the posterior distribution of $\theta_l$. To optimize the posterior $P_l$, we need to choose appropriate hyperparameters for its distribution. For instance, in a Gaussian distribution, we optimize its mean $\mu$ and variance $\sigma^2$.

Let $\tau_l \sim \theta_{l-1} \times \gM_1 \dots \times \dots \times \gM_N$ be the data induced from previous $\theta_{l-1}$, and define the likelihood function $p(g(\tau_l)|\theta_{l-1})$. Suppose the prior distribution is a probability density function $\barb{p}(\theta_{l-1}; q)$, parametrized by $q$, such as a Gaussian prior $\barb{P}_l = \gN(\barb{\mu}_l, \barb{\sigma}_l)$, where $q \coloneqq  (\barb{\mu}_l, \barb{\sigma}_l)$.

Based on Bayes' Rule, the posterior distribution is uniquely given by $p(\theta_{l-1}|g(\tau_l);q) = \frac{ p(g(\tau_l)|\theta_{l-1})\barb{p}(\theta_{l-1}; q)}{c(q)}$, where $c(q)$ is the normalization constant depends on $q$. We can optimize the hyperparameter $q$ using the following equation:
\beq
\label{eq3}
\min_{q}U_l(P;q)  \coloneqq &  \sum_{i=1}^N \Ept{\theta_{l-1}\sim p(\theta_{l-1}|g(\tau_l);q)}{ -J_{\mathcal{M}_{l,i}}(\pi_{\theta_{l-1}})}\\ & + \mathscr R(\mathbb{D}_{KL}\left(P \| \barb{P}; q\right)),  
\beq
 where  $\mathscr R(\mathbb{D}_{KL}\left(P \| \barb{P}; q\right))$ is defined in \eqref{eq:pac_bound}.
In Equation~\eqref{eq3}, the posterior distribution is unknown, and obtaining an explicit expression requires knowing the data likelihood. In the RL regime, data samples consist of states, actions, and value functions, and one approach is to use the exponential of the negative squared temporal difference (TD) error as an unnormalized likelihood, as suggested in \cite{dann2021provably}, which is left for future research.

In PAC-Bayes, the prior and posterior distributions can belong to different families. However, it is often practical to consider them belonging to a common distribution family, as it simplifies the computation of KL divergence. Hence, we assume the default and prior policy distributions for $\theta$ to be $d$-dimensional Gaussians with unknown parameters $\theta \sim \gN(\mu, \sigma^2 )$. These parameters are updated by minimizing the upper bound.

Based on Equation~\eqref{eq3}, we solve the following problem where $\phi(\theta; \mu, \sigma)$ is the Multivariate Gaussian PDF:
\begin{flalign}
\label{eq20}
 \min_{\mu, \sigma} U(P; \mu, \sigma) \coloneqq & \sum_{i=1}^N \int -J_{\mathcal{M}_{l,i}}(\pi_{\theta_{l-1}})\phi(\theta;\mu, \sigma) d\theta & \nonumber \\
&+ \mathscr{R} \left( \kld{\gN(\mu, \sigma^2)}{\gN(\barb{\mu}; \barb{\sigma^2})} \right). &
\end{flalign}

Evaluating the integral in Equation~\eqref{eq20} analytically is intractable in practice. Therefore, we resort to Monte Carlo Methods, where we sample ${\theta_{l-1,j}}_{j\in[M]}$ to approximate the gradient descent updates by:
\begin{equation}
\label{eq9}
\begin{aligned}
& \nabla_{\mu, \sigma} \hat{U}(P, \{\gM_i\}_{i\in[N]}, \{\theta_{l-1,j}\}_{j\in[M]}; \mu, \sigma)  \\ 
 & \coloneqq 
\frac{1}{M} \sum_{j=1}^M \sum_{i=1}^N -\nabla_{\mu, \sigma} \big \{J_{\mathcal{M}_{l,i}}(\pi_{\theta_{l,j}}) \\ &  + \mathscr{R} \left(\kld{\gN(\mu, \sigma^2)}{\gN(\barb{\mu}; \barb{\sigma^2})}\right) \big \},
\end{aligned}
\end{equation}
where $\theta_{l-1,j}$ is a sample drawn from $P_{l-1}$ to perform gradient descent during optimization in each iteration.

Moreover,  to ensure the parameters $\mu$ and $\sigma$ can be updated, we use indirect sampling by first sampling a multivariate standard normal distribution $\epsilon_j$. The randomness of the parameter $\theta_j$ is then defined as:
\beq 
\label{eq10}
\theta_j = \mu  + \sigma \odot \epsilon_j, \quad \epsilon_j \sim \mathcal{N}(0,I_d).
\beq 
According to Equation \eqref{eq10}, the parameter $\theta_j$ is multivariate normal distributed with $\theta_j \sim \mathcal{N}(\mu, \sigma^2)$.
\subsection{A Practical EPIC Algorithm}

We propose a practical EPIC algorithm, called EPICG, as presented in Algorithm~\ref{alg:mrl}. In this algorithm, a policy is defined as a Gibbs distribution in a linear combination of features: $\pi_{\theta}(s,a) = \frac{\exp{\theta^{\top} \psi_{s,a}}}{\sum_b\exp{\theta^{\top} \psi_{s,b}}}$. Here, $\theta$ can be replaced by a neural network.

For the parameterization of $\theta$ using a neural network, we provide the details in  Appendix \S\ref{a300}.
In each iteration, the agent samples a set of policies ${\theta_j}_{j\in[M]} \sim P$ from the "posterior" policy distribution for every $N$ tasks (Lines 8-9). It then rolls out a set of trajectories $\tau = {\tau}_{i\times j \in [N]\times [M]}$ for each task and estimates the cost (Lines 10-11). More specifically, an action $a$ is sampled as $a \sim \pi_{\theta_{l,j}}(s,a)$, and a state $s$ is sampled using a transition kernel determined by task $\gM_{i}$.

The gradient is taken with respect to the objective function $\hat{U}$ 
(Equation~\eqref{eq20}) which with respect to the cost function  
and with respect to the KL divergence function expressed in Equation~\eqref{eq:pac_bound}.

EPICG uses gradient descent in the space of $P$ to find the policy that minimizes the expected loss, i.e., $P^* \in \arg\inf_{P \in \Pi}  \frac{1}{K}\sum_{i=1}^K \mathbb{E}_{\left\{\theta_l \right\}_{l=0}^{T-1} \sim P} [\mathbb{E}_{\left\{\tau_l \right\}_{l=1}^{T} \sim \mathscr{T}} [- J_{\mathcal{M}_i}(\pi_{\theta}) ] ]$, where we assume the $P^*$ exists. 
\raggedbottom
\begin{algorithm}[t]
   \caption{Empirical PAC-Bayes that Improves Continuously Under Gaussian Prior) (EPICG)}
   \label{alg:mrl}
\begin{algorithmic}[2]
   \STATE {\bfseries Input:} policy dimension $d$; 
    learning rate $\beta$; update frequency $N$; failure probability $\delta$;
    the number of steps allowed in each task $H$; prior evolving speed $\lambda$
   \STATE Initialize prior policy mean and derivation $\barb{\mu}_0$, $\barb{\sigma}_0 \in \mathbb{R}^d$
   \STATE Initialize default policy mean and derivation $\mu_0 \gets \barb{\mu}_0$, $\sigma_0 \gets \barb{\sigma}_0$  
   \FOR{$i=1,2,3,\cdots, K, \cdots \infty $}
   \STATE Receive a new task $\mathcal{M}_i \sim \mathcal{D}_i$ and store it in memory.
   \IF{$i \bmod N = 0$}
    \STATE Let $l = i/N$
        \STATE  Sample $\set{\theta_{l-1,j}}_{j\in[M]} \sim \gN(\mu_{l-1}, \sigma_{l-1}^2)$ by sample $\epsilon_j \sim \mathcal{N}(0,I_d)$
        \STATE Set initial policy, i.e., initialize parameters for neural network $\theta_{l-1,j} \gets \{\mu_{l-1} + \epsilon_j \odot \sigma_{l-1} \}$
    \STATE{Roll out trajectories $\tau_{k,j}$ using $\set{\pi_{\theta_{l-1,j}}}_{j\in[M]}$ and $\set{\gM_k}_{k=i-N+1}^i$ } and store into $\tau_{l,j}$
        \STATE \# Update default and Prior parameters by using $\tau_{l,j}$
        \STATE {\small{$\mu_l \gets \mu_{l-1} - \beta \nabla_{\mu} \hat{U}(P, \{\gM_k\}, \{\theta_{l-1,j}\}_{j\in[M]}; \mu, \sigma)$}}
        \STATE \small{$\sigma_l \gets \sigma_{l-1} - \beta \nabla_{\sigma} \hat{U}(P, \{\gM_k\}, \{\theta_{l-1,j}\}_{j\in[M]}; \mu, \sigma)$}
        \STATE $\barb{\mu}_l \gets (1-\lambda) \barb{\mu}_l + \lambda \mu_l;$ $\barb{\sigma} \gets (1-\lambda) \barb{\sigma}_l + \lambda \sigma_l$
        \STATE {Empty Memory by clearing dataset $\tau_{l}$}
   \ENDIF
   \ENDFOR
\end{algorithmic}
\end{algorithm}

\begin{figure*}[t]
    \centering
    \begin{subfigure}{0.15\textwidth}
        \includegraphics[width=\textwidth]{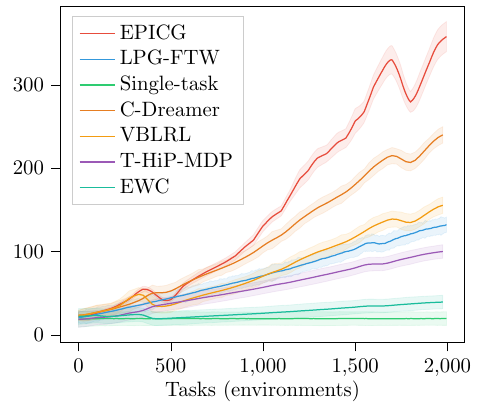}
        \caption{HalfC.-gravity}
    \end{subfigure}
    \begin{subfigure}{0.15\textwidth}
        \includegraphics[width=\textwidth]{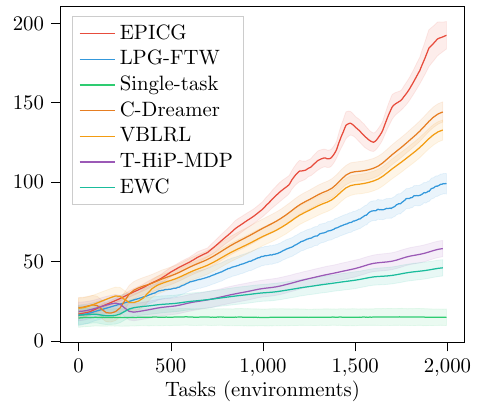}
        \caption{HalfC.-bodyp.}
    \end{subfigure}
    \begin{subfigure}{0.15\textwidth}
        \includegraphics[width=\textwidth]{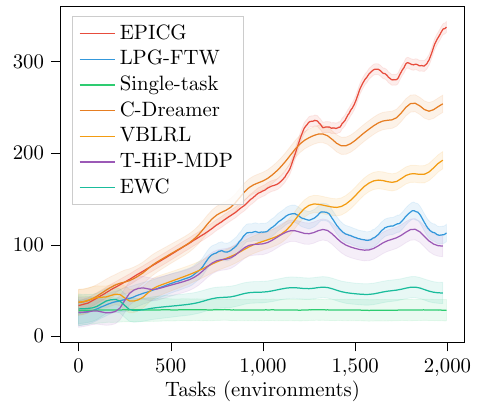}
        \caption{Hopper-gravity}
    \end{subfigure}
    \begin{subfigure}{0.15\textwidth}
        \includegraphics[width=\textwidth]{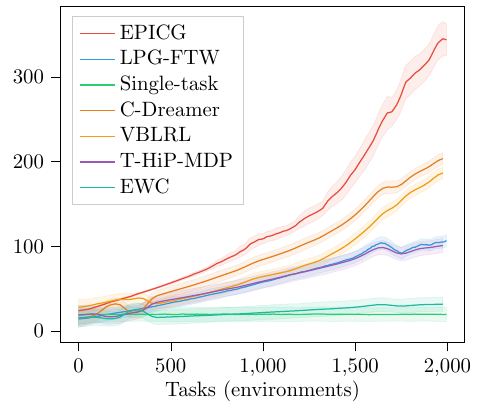}
        \caption{{\small Hop.-bodyp.}}
    \end{subfigure}
    \begin{subfigure}{0.15\textwidth}
        \includegraphics[width=\textwidth]{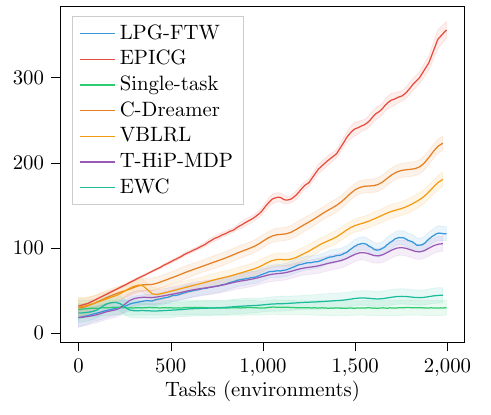}
        \caption{Walker-gravity}
    \end{subfigure}
    \begin{subfigure}{0.15\textwidth}
        \includegraphics[width=\textwidth]{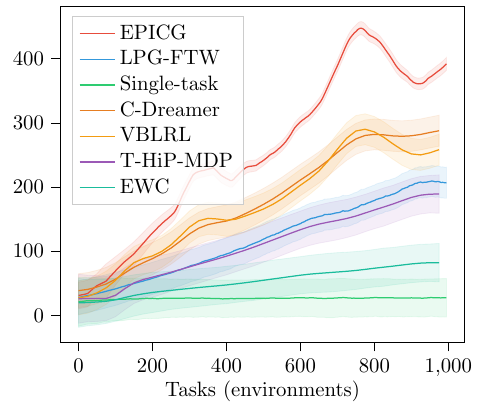}
        \caption{Walker-bodyp.}
    \end{subfigure}
    \caption{Comparison between EPICG and baselines on lifelong RL benchmarks. $X$-axis: tasks, $Y$-axis: reward.  CartPole-Goal with $x_{goal} \sim \mathcal{N}(0,0.1)$ and $x_{goal} \sim \mathcal{N}(0,0.5)$, LunarLander, CartPole-Mass with $\mu_c = 0.5$ and $\mu_c = 1.0$, and Swimmer.}
   \label{fig:combined_tasks}
\end{figure*}

 We denote the optimal expected return as $J^* \coloneqq \frac{1}{K}\sum_{i=1}^K \mathbb{E}_{\left\{\theta_l \right\}_{l=0}^{T-1} \sim P^*} [\mathbb{E}_{\left\{\tau_l \right\}_{l=1}^{T} \sim \mathscr{T}} [ J_{\mathcal{M}_i}(\pi_{\theta}) ] ].$ We denote the return as $\tilde J \coloneqq \frac{1}{K} \sum_{i=1}^{K} 
\mathbb{E}_{\left\{\theta_l \right\}_{l=0}^{T-1} \sim P} [ J_{\mathcal{M}_i}(\pi_{\theta}) ].$, which is also equal to the negative training error. We provide the sample complexity of EPICG. 

\begin{theorem}[Sample Complexity]
\label{lemma:sample complexity}
Consider the setting of Theorem~\ref{thm:main}. 
Given a small $\epsilon > 0$, 
if the number of tasks $K$ satisfies 
\begin{align*}
    K = & \max \left( \frac{16NH^2 \lambda^2 r^2}{s_{\min}(1-\alpha)^3 (1+\alpha)\epsilon^2} , \left( \frac{16NH^2}{\epsilon^2} \right)^{\frac{1}{1-\gamma}} \right) \\ &+ \widetilde{\mathcal O}(N\epsilon^{-4}),
\end{align*}
 then with high probability, $$J^* -  \tilde J \leq \mathcal O(\epsilon),$$ where $\widetilde{\mathcal O}(\cdot)$ suppresses logarithmic dependence.
\end{theorem}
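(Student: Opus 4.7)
The plan is to turn the PAC-Bayes guarantee of Theorem~\ref{thm:main} into a suboptimality bound on $J^*-\tilde J$ and then invert each piece of $\mathscr R$ in $\epsilon$ to read off how large $K$ must be. Write $\hat L(P)$ for the training error and $L(P)$ for the expected loss defined in Theorem~\ref{thm:main}. Then by the definitions preceding the statement, $-\tilde J=\hat L(P)$ at the learned posterior $P$ and $-J^*=L(P^*)$ at the optimal posterior $P^*$, so the quantity to control is $\hat L(P)-L(P^*)$.

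First, I would exploit the fact that EPIC(G) outputs $P$ as an (approximate) minimizer of $U(P')=\hat L(P')+\mathscr R(\mathbb{D}_{KL}(P'\|\barb{P}))$. Evaluating $U(P)\leq U(P^*)$ and dropping the nonnegative $\mathscr R(\mathbb{D}_{KL}(P\|\barb{P}))$ on the left yields $\hat L(P)\leq \hat L(P^*)+\mathscr R(\mathbb{D}_{KL}(P^*\|\barb{P}))$. Next, because $P^*$ is a data-independent distribution and each per-task value $J_{\mathcal M_i}(\pi_\theta)$ is bounded by $H$, Hoeffding's inequality on the $K$ per-task losses delivers $\hat L(P^*)-L(P^*)\lesssim H\sqrt{\log(1/\delta')/K}$ with probability at least $1-\delta'$. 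A union bound against the $1-2\exp(-K^\gamma)$ event of Theorem~\ref{thm:main} then gives
\[J^*-\tilde J \;\leq\; \mathscr R(\mathbb{D}_{KL}(P^*\|\barb{P}))+\mathcal O\!\left(H\sqrt{\tfrac{\log(1/\delta')}{K}}\right),\]
and Theorem~\ref{thm:main} supplies the posterior-independent closed-form upper bound on $\mathscr R(\mathbb{D}_{KL}(P^*\|\barb{P}))$ given by~\eqref{eq:pac_bound}.

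The final step is a book-keeping calculation: set each of the two explicit summands in~\eqref{eq:pac_bound}, plus the Hoeffding term, to at most $\epsilon/3$ and invert. Using $1-\alpha^{2(K/N-1)}\leq 1$ and $1-\alpha^2=(1-\alpha)(1+\alpha)$, solving the first summand produces the first branch of the $\max$, $K\gtrsim NH^2\lambda^2 r^2/(s_{\min}(1-\alpha)^3(1+\alpha)\epsilon^2)$. Solving $\frac{2N^{1/2}H}{K^{(1-\gamma)/2}}\leq\epsilon/3$ produces the second branch $K\gtrsim(NH^2/\epsilon^2)^{1/(1-\gamma)}$. The remaining additive $\widetilde{\mathcal O}(N\epsilon^{-4})$ is designed to absorb two further slacks that the high-level plan above leaves implicit: the Hoeffding term (contributing $\widetilde{\mathcal O}(\epsilon^{-2})$), and the stochastic-optimization error from the fact that EPICG only approximately minimizes $U$ via SGD on the Monte-Carlo gradient~\eqref{eq9} with $M$ parameter samples per update, which in a non-convex regime scales like $\epsilon^{-4}$.

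The step I expect to be the main obstacle is this last one: quantifying the SGD/Monte-Carlo optimization error for EPICG in a way that plugs cleanly into the PAC-Bayes accounting. The objective $U(\mu,\sigma)$ is non-convex, the gradient estimator in~\eqref{eq9} mixes two independent sources of randomness (trajectories $\tau_l$ and parameter samples $\{\theta_{l-1,j}\}$), and the ``$U(P)\leq U(P^*)$'' inequality used in the first step only holds up to this optimization slack. Everything upstream---Theorem~\ref{thm:main}, Hoeffding for a fixed $P^*$, and inverting the closed-form $\mathscr R$---is standard; carefully pinning down the non-convex stochastic-optimization error in units compatible with the PAC-Bayes bound is the delicate technical piece.
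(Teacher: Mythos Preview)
Your decomposition is not the one the paper uses, and the route you chose has a real gap. The paper's argument is much shorter: it splits
\[
J^*-\tilde J \;=\; (J^*-\bar J)\;+\;(\bar J-\tilde J),
\]
where $\bar J$ is the (negated) \emph{expected loss at the learned $P$}. Theorem~\ref{thm:main} is applied \emph{at the learned $P$} to get $|\bar J-\tilde J|\le \mathscr R$, and the gap $J^*-\bar J$ is handled by a black-box policy-gradient global convergence result (Corollary~C.1 of \cite{yuan2022general}) which gives $J^*-\bar J\le \mathcal O(\epsilon/2)$ after $T=K/N=\widetilde{\mathcal O}(\epsilon^{-4})$ updates. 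Setting $\mathscr R\le\epsilon/2$ and inverting yields the $\max(\cdot)$ term; the PG lemma yields the additive $\widetilde{\mathcal O}(N\epsilon^{-4})$. There is no $U(P)\le U(P^*)$ step, no Hoeffding at $P^*$, and no need to evaluate $\mathscr R$ at $P^*$.

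The step that breaks in your plan is the sentence ``Theorem~\ref{thm:main} supplies the posterior-independent closed-form upper bound on $\mathscr R(\mathbb D_{KL}(P^*\|\barb P))$ given by~\eqref{eq:pac_bound}.'' The closed form in~\eqref{eq:pac_bound} is \emph{not} posterior-independent in the sense you need: it is obtained from the generic PAC-Bayes inequality only after Lemma~\ref{bound_for_kl} bounds $\sum_l\mathbb D_{KL}(P_l\|\barb P_l)$, and that lemma uses the specific prior-update rule $\barb P_l=(1-\lambda)\barb P_l+\lambda P_l$ of Algorithm~\ref{alg:mrl-0}. The optimal $P^*$ does not arise from this update sequence, so $\mathbb D_{KL}(P^*\|\barb P)$ is not controlled by Lemma~\ref{bound_for_kl} and hence $\mathscr R(\mathbb D_{KL}(P^*\|\barb P))$ is not bounded by~\eqref{eq:pac_bound}. (If instead you treat $\mathscr R$ as the constant in~\eqref{eq:pac_bound}, it cancels on both sides of $U(P)\le U(P^*)$ and your bound loses the $\mathscr R$ term entirely, which no longer matches the statement.) The paper avoids this issue by never invoking the PAC-Bayes bound at $P^*$; the comparison to the optimum is done purely in expected-loss space via the PG convergence lemma. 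Your identification of the $\widetilde{\mathcal O}(N\epsilon^{-4})$ as coming from non-convex stochastic optimization is on target, but it enters as a direct bound on $J^*-\bar J$, not as slack in $U(P)\le U(P^*)$.
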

\begin{proof}
The central part of the proof is Theorem~\ref{thm:main} (detailed proof in Appendix~\S\ref{sec_a200}), and the remaining parts are provided in Appendix~\S\ref{lemma_sample_complexity_pg}. 
\end{proof}

Algorithm~\ref{alg:mrl} learns a general policy distribution. However, if we are interested in a policy for any specific task, we can sample a policy from $P$ and use an appropriate single-task learning method to fine-tune the policy. Hence, every task gets a "customized" policy. We also provide an Algorithm~\ref{alg:mrl-SAC} to reflect this in Section~\ref{sub_refinement}.
 
\begin{figure*}[t]
\centering
\minipage[c]{0.16\textwidth}
\subcaptionbox{{\small \centering CartP.-\centering Unif.}\label{fig:cart1}}{
\includegraphics[width=\linewidth, height=1in]{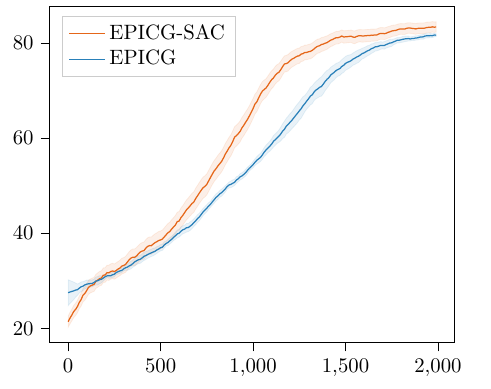}}
\endminipage\hspace{-5pt}
\minipage[c]{0.16\textwidth}
\subcaptionbox{ {\centering LunarL.-GMM}\label{fig:lunar1}}{
\includegraphics[width=\linewidth, height=1in]{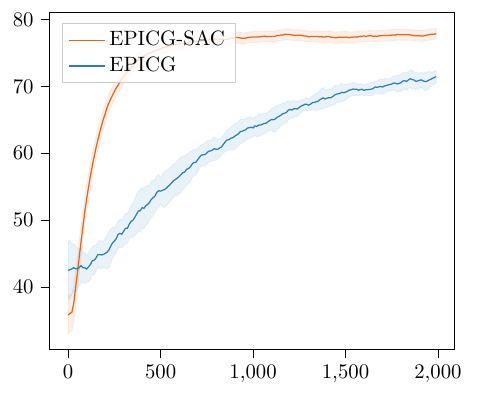}}
\endminipage\hspace{-5pt}
\minipage[c]{0.16\textwidth}
\subcaptionbox{ {\centering AntD.-Unif.}\label{fig:ant1}}{
\includegraphics[width=\linewidth, height=1in]{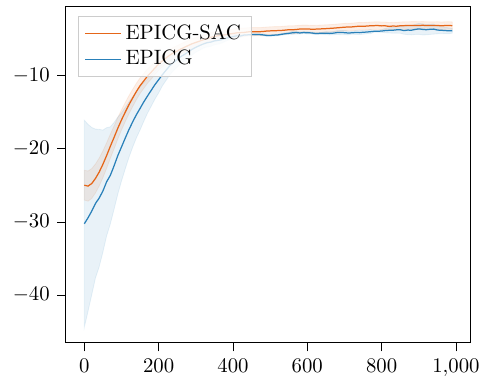}}
\endminipage\hspace{-5pt}
\minipage[c]{0.16\textwidth}
\subcaptionbox{{\centering AntF.B.-Bern.}\label{fig:ant2}}{
\includegraphics[width=\linewidth, height=1in]{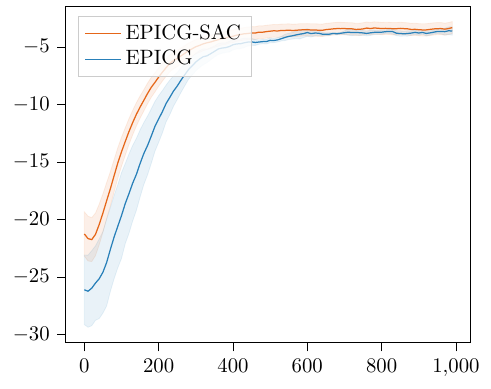}}
\endminipage\hspace{-5pt}
\minipage[c]{0.16\textwidth}
\subcaptionbox{{\centering Swi.-Unif.}\label{fig:swimmer1}}{
\includegraphics[width=\linewidth, height=1in]{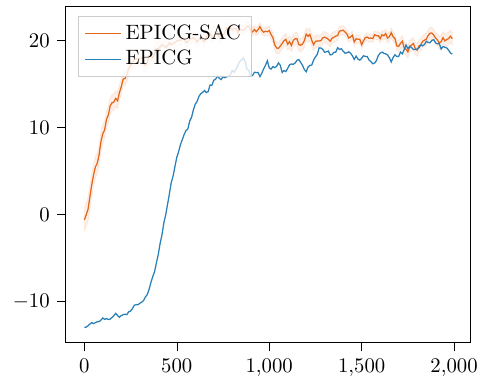}}
\endminipage\hspace{-5pt}
\minipage[c]{0.16\textwidth}
\subcaptionbox{{\centering Hum.D.-Unif.}\label{fig:human}}{
\includegraphics[width=\linewidth, height=1in]{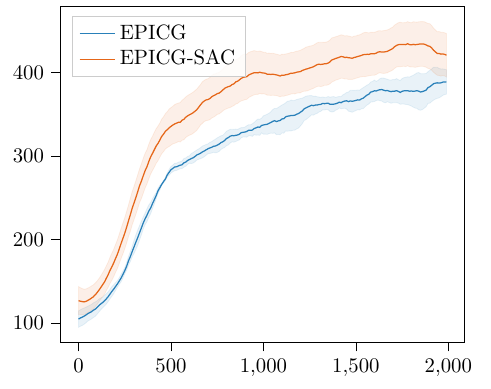}}
\endminipage
\\
\caption[]{Average reward obtained by EPICG-SAC and EPICG in different environments with different lifelong learning settings.
$X$-axis: tasks, $Y$-axis: reward.}
\label{fig:with_sac}
\end{figure*}

\section{EXPERIMENTS}

\subsection{Experimental Setup}
We experiment with common tasks in lifelong-RL benchmarks used in prior works \citep{mendez2020lifelong, fu2022model}, including HalfCheetah-gravity, HalfCheetah-bodyparts, Hopper-gravity,
Hopper-bodyparts, Walker-gravity, Walker-bodyparts. To increase the diversity of lifelong environments, we also create several more lifelong environments,  Cartpole-GMM, LunarLander-Uniform, Ant-Direction-Uniform, Ant-Forward-Backward-Bernoulli, Swimmer-Uniform, Humanoid-Direction-Uniform. Details about the above environments can be found in Appendix \S\ref{openai mamujoco} and in Table \ref{tab:dist_dynamic_cartpole}.
In each lifelong environment, the agents are tested across 2,000 or 1,000  tasks. Each environment has a distinct maximum $H$. As the sequence of 2,000 or 1,000 tasks unfolds, we update the default policy every $N$ tasks. The effectiveness of our approach is assessed by how how fast it learns to maximize return as new tasks emerge.

\subsection{Effective Lifelong Learning}
Figure~\ref{fig:combined_tasks} evaluates EPICG\footnote{Our method is publicly available at \url{https://zzh237.github.io/EPIC/}.} across several control tasks, all of which are lifelong-RL benchmarks  used in prior works \citep{mendez2020lifelong, fu2022model}. This reveals EPICG's noticeable advantage in most scenarios. EPICG consistently outperforms others.  We compared EPICG  against: 1. Continual Dreamer \citep{kessler2023effectiveness}, state-of-the-art lifelong RL method, 2. VBLRL \citep{fu2022model}, Model based Bayesian lifelong RL method; 3. LPG-FTW \citep{mendez2020lifelong},  a lifelong RL method which assumes a factored representations of the policy parameter space; 4. EWC \citep{kirkpatrick2017overcoming},
which is a single-model lifelong RL algorithm that achieves comparable performance with LPG-FTW
as shown in the latter paper; 5. T-HiP-MDP \citep{killian2017robust}, which is a model-based lifelong RL baseline; and 6. Single-Task RL, which let the agent learn the task policy from scratch for every new task and does not use the world policy to help learning. Further details on each baseline can be found in Appendix \S\ref{sec_related_works}.

\subsection{Further Improvement}
\label{sub_refinement}
 EPICG effectively learns a shared distribution $P$ of policy parameters for different tasks. 
 Upon receiving new tasks, we learn the policy distribution by using the sampled policy parameter $\theta$. At this point, this approach has already shown effectiveness in our lifelong learning setting.
Additionally, we can further improve this $\theta$ by optimizing it using data from the new task, customizing it for that particular task. Below we introduce Algorithm \ref{alg:mrl-SAC} (EPICG-SAC), which integrates the EPICG framework with the single task algorithm Soft Actor Critic. 

\begin{algorithm}[!htbp]
   \caption{EPICG-SAC}
   \label{alg:mrl-SAC}
\begin{algorithmic}[1]  
   \STATE {\bfseries Input:} Same setting as Algorithm~\ref{alg:mrl}
   \FOR{$i=1,2,3,\cdots, K, \cdots \infty$}
       \STATE Receive a new task $\mathcal{M}_i$.
       \IF{$i \bmod N = 0$}
           \STATE Do EPICG Policy Distribution Learning.
       \ENDIF
       \STATE \bf{SAC single-task train-eval loop.}
   \ENDFOR
\end{algorithmic}
\end{algorithm}

We then compare  EPICG-SAC and EPICG for different environments.  Figure~\ref{fig:with_sac} shows EPICG-SAC achieves faster learning than EPICG.

\subsection{Ablation on KL divergence regularization}
We first verify the empirical performance when we add the regularizer in \eqref{eq9} compared to having no regularizer by a comparison study.  The results are shown in Figure~\ref{fig:ab1}, where we observe that adding the regularizer facilitates fast adaptation, leads to learning a higher reward, and also reduces the variance, which leads to a more stable learning compared to having no regularize.

\begin{figure}[ht]
\centering
\hspace*{-0.0in}

\begin{subfigure}[c]{0.23\textwidth}
\centering
\includegraphics[width=\linewidth,height=1.in]{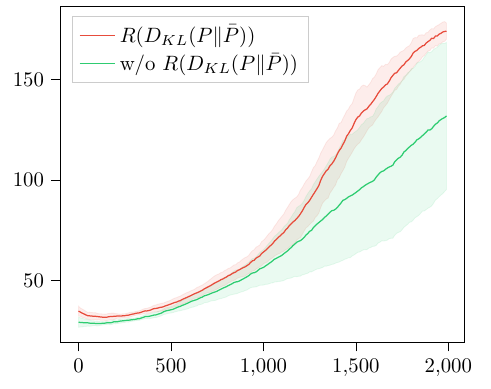}
\caption{Cartpole-GMM}
\label{fig:ab1-2}
\end{subfigure}\hspace{-8pt}
\begin{subfigure}[c]{0.23\textwidth}
\centering
\includegraphics[width=\linewidth,height=1.in]{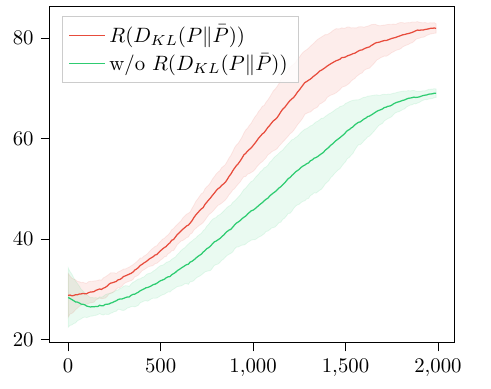}
\caption{LunarLander-GMM}
\label{fig:ab1-1}
\end{subfigure}
\caption[]{Comparison of adding $\mathscr{R}(\mathbb{D}_{KL}(P \| \bar{P}))$ vs. not. $X$-axis: tasks, $Y$-axis: reward.}
\label{fig:ab1}
\end{figure}

\vspace{-5pt}
\subsection{Experiments on Memory Size $N$}

\begin{figure}[ht]
\centering

\begin{subfigure}{0.15\textwidth}
\centering
\includegraphics[width=\linewidth]{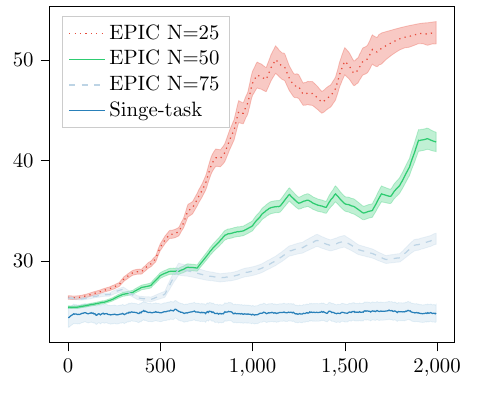}
\caption{}
\label{cart_n}
\end{subfigure}
\begin{subfigure}{0.15\textwidth}
\centering
\includegraphics[width=\linewidth]{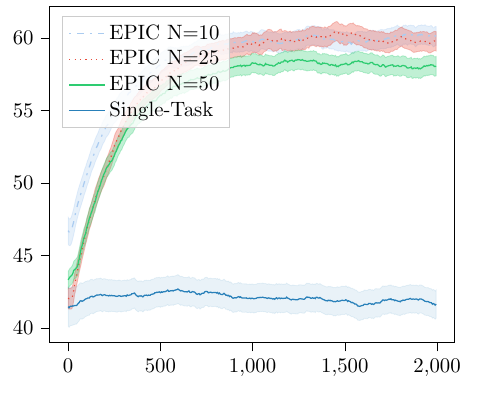}
\caption{}
\label{lunar_n}
\end{subfigure}
\begin{subfigure}{0.15\textwidth}
\centering
\includegraphics[width=\linewidth]{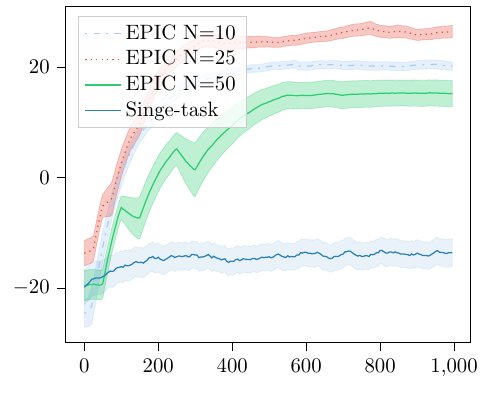}
\caption{}
\label{swimmer_n}
\end{subfigure}

\vspace{-1em}
\caption{Comparison of different update frequency $N$ on 
(\subref{cart_n}) CartPole-Uniform;
(\subref{lunar_n}) LunarLander-Uniform;
(\subref{swimmer_n}) Swimmer-Uniform. $X$-axis: tasks, $Y$-axis: reward}
\label{fig:all_n}
\end{figure}

As we discussed in Theorem~\ref{thm:main}, there is a performance trade-off on the number of tasks $N$ retained
in memory. Experimental results have verified this theoretical finding. We can see that in Figure~\ref{fig:all_n} the practical effect of $N$ on the performance of learning is double-sided.


\section{CONCLUSION AND FUTURE WORKS}
\label{sec:discuss}

\noindent In this work, we address the challenging problem of lifelong RL and propose a novel algorithm, \ourmod{}(G), for distribution learning and policy sampling. Our approach leverages the concept of a \textit{world policy}, a shared policy distribution across tasks. This world policy is updated continuously, enabling our algorithm to handle both non-stationarity and catastrophic forgetting, achieving best-in-class performance across a suite of complex lifelong RL benchmarks.

Future directions include exploring more accurate ways to obtain the posterior distribution of the policy parameters, as discussed in Section~\ref{post-prior-dist}. Additionally, since the optimization objective in Equation \eqref{eq9} is nonconvex, better performance guarantees could be achieved by investigating multiple optimizations across tasks.

\bibliographystyle{apalike}
 
\bibliography{example_paper}


\clearpage 

\onecolumn
\begin{center}
    \textbf{\Large Supplementary Material}
\end{center}

\appendix



\section{Detailed Proofs}
\label{app:proofs}

\subsection{Proof of Proposition~\ref{cor_docomp}}
\label{sec_proof1}
\begin{*proposition}[Decomposition of Training Error]
    Assume Assumption~\ref{assump:1} holds, then it holds true that
\begin{align*}
   &\frac{1}{K} \sum_{i=1}^{K} 
\mathbb{E}_{\left\{\theta_l \right\}_{l=0}^{T-1} \sim P} [- J_{\mathcal{M}_i}(\pi_{\theta}) ] = \frac{1}{T}\sum_{l=1}^{T} \frac{1}{N}\sum_{i=1}^N \mathbb{E}_{\theta_{l-1} \sim P_{l-1}}  \left[-J_{\mathcal{M}_{l,i}}(\pi_{\theta_{l-1}}) \right].
\end{align*}

\end{*proposition}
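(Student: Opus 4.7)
The identity is a bookkeeping statement: once one unpacks the Markov factorization from Equation~\eqref{eq7} and remembers that, within the $l$-th block of $N$ tasks, every reward $J_{\mathcal{M}_{l,i}}(\pi_{\theta_{l-1}})$ depends only on the single iterate $\theta_{l-1}$, the decomposition falls out by marginalization. Concretely, the first step is to partition the $K=TN$ tasks into $T$ blocks of size $N$ and relabel $\mathcal{M}_i$ as $\mathcal{M}_{l,i}$, so that
\[
\frac{1}{K}\sum_{i=1}^{K} \mathbb{E}_{\{\theta_l\}\sim P}\!\left[-J_{\mathcal{M}_i}(\pi_\theta)\right]
=\frac{1}{TN}\sum_{l=1}^{T}\sum_{i=1}^{N}\mathbb{E}_{\{\theta_l\}\sim P}\!\left[-J_{\mathcal{M}_{l,i}}(\pi_{\theta_{l-1}})\right].
\]
Here I am using the operational fact from Algorithm~\ref{alg:mrl-0} that block $l$ rolls out its trajectories using $\theta_{l-1}$, which is why the integrand inside the block-$l$ sum contains only $\theta_{l-1}$ and none of the other iterates.

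The second step is to invoke Equation~\eqref{eq7}, which under Assumption~\ref{assump:1} gives
$P(\{\theta_j\}_{j=0}^{T-1}) = P_0\cdot\prod_{j=1}^{T-1} P_j$ with each $P_j$ being the Markov conditional $P(\theta_j\mid\theta_{j-1})$. Since the inner integrand depends only on $\theta_{l-1}$, I would successively integrate out $\theta_{T-1},\theta_{T-2},\ldots,\theta_l$ against their respective conditional factors (each integral contributing $1$ by normalization) and then integrate out $\theta_0,\ldots,\theta_{l-2}$, which collapses the remaining product to the marginal distribution of $\theta_{l-1}$ induced by the chain. By Fubini this leaves precisely
\[
\mathbb{E}_{\theta_{l-1}\sim P_{l-1}}\!\left[-J_{\mathcal{M}_{l,i}}(\pi_{\theta_{l-1}})\right],
\]
which, summed over $l$ and $i$ with the factor $\tfrac{1}{TN}$, is the right-hand side.

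The main obstacle is less analytic than notational: the symbol $P_{l-1}$ was introduced as the \emph{conditional} distribution $P(\theta_{l-1}\mid\theta_{l-2})$, yet the statement writes $\theta_{l-1}\sim P_{l-1}$ as if it were a marginal. I would address this explicitly by recording at the outset that, throughout the write-up, $P_{l-1}$ is identified with the marginal of $\theta_{l-1}$ induced by the Markov factorization~\eqref{eq7}, so the two readings coincide; once this convention is fixed, the marginalization argument proceeds without issue. Apart from this bookkeeping, the only ingredients used are Assumption~\ref{assump:1} (to justify~\eqref{eq7}) and Fubini's theorem, so no additional machinery is required.
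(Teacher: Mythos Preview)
Your proposal is correct and follows essentially the same approach as the paper: partition the $K$ tasks into $T$ blocks, observe that the block-$l$ integrand depends only on $\theta_{l-1}$, and then use the Markov factorization from Assumption~\ref{assump:1}/Equation~\eqref{eq7} (the paper phrases this as the law of total expectation plus conditional independence) to marginalize out the remaining $\theta_j$'s. Your explicit flagging of the marginal-versus-conditional reading of $P_{l-1}$ is a useful clarification that the paper leaves implicit.
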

\begin{proof}
By Assumption~\ref{assump:1},
\beq 
\label{eq7}
& P(\theta_{T-1}, \dots, \theta_{0}) \\ =& P(\theta_{T-1}|\theta_{T-2},\dots, \theta_0)\times \dots \times P(\theta_1|\theta_0)P(\theta_{0})\\
=& P(\theta_{T-1}|\theta_{T-2})\times \dots \times P(\theta_1|\theta_0)P(\theta_{0}) \\
\coloneqq & P_{T-1}\times \dots \times P_l \times \dots \times P_{0}
\beq 

By law of total expectation, we have 
\begin{align*}
 &\mathbb{E}_{\left\{\theta_l \right\}_{l=0}^{T-1} \sim P} [- J_{\mathcal{M}_i}(\pi_{\theta}) ] = \mathbb{E}_{\left\{\theta_l\sim P_l \right\}_{l=0}^{T-1} } [- J_{\mathcal{M}_i}(\pi_{\theta}) ] \\ 
&= \frac{1}{N} \sum_{i=1}^N E_{P_{T-2}, \dots, P_0}E_{\theta_{T-1} \sim P_{T-1}|P_{T-2}, \dots, P_0}[- J_{\gM_{T,i}}(\pi_{\theta_{T-1}})] +   \\
&\cdots + \frac{1}{N} \sum_{i=1}^N E_{\theta_{0} \sim P_0}[- J_{\gM_{T,i}}(\pi_{\theta_0})] \\
&= \frac{1}{N} \sum_{i=1}^N E_{P_{T-2}}E_{\theta_{T-1} \sim P_{T-1}|P_{T-2}}[- J_{\gM_{T,i}}(\pi_{\theta_{T-1}})] +   \\
&\cdots + \frac{1}{N} \sum_{i=1}^N E_{\theta_{0} \sim P_0}[- J_{\gM_{T,i}}(\pi_{\theta_0})] \\ 
&= \frac{1}{N} \sum_{i=1}^N E_{\theta_{T-1} \sim P_{T-1}}[- J_{\gM_{T,i}}(\pi_{\theta_{T-1}})] +   \\
&\cdots + \frac{1}{N} \sum_{i=1}^N E_{\theta_{0} \sim P_0}[- J_{\gM_{T,i}}(\pi_{\theta_0})] \\ 
&=\frac{1}{T}\sum_{l=1}^{T} \frac{1}{N}\sum_{i=1}^N \mathbb{E}_{\theta_{l-1} \sim P_{l-1}}  \left[-J_{\mathcal{M}_{l,i}}(\pi_{\theta_{l-1}}) \right].
\end{align*}
   
\end{proof}

\subsection{Azuma-Hoeffiding or Freedmans inequality for martingale difference sequences for RL}
\label{sec_a200}
We let the Algorithm~\eqref{alg:mrl-0} experience $K$ tasks, for every $N$ task Algorithm~\eqref{alg:mrl-0} performs lifelong learning, i.e., learns the posterior distribution hyperparameters for policy. 

Remember, in Algorithm~\eqref{alg:mrl-0}, where the the lifelong setting happens with $K$ tasks streaming in, we update the default policy every $N$ tasks and estimate the training cost based on the most recent $N$ tasks. 
 The entire learning process consists of a total of $T$ episodes of updates.

Let the distribution of policy has a parameter mean $\mu$ and variance-covariance $\sigma^2$ for illustration purposes. For each $l \in [T]$ episode, Algorithm~\eqref{alg:mrl-0} proceeds as: 
\begin{enumerate}[topsep=0pt, itemsep=-1ex]
\item Sample $\theta_{l-1} \sim P_{l-1}(\theta; \mu_{l-1}, \sigma_{l-1}^2)$ learnt from the previous episode.
\item Using $\theta_{l-1}$  and $N$ tasks $\set{\gM_{i,l}}_{i\in[N]}$ from current episode to collect data $\tau_l$; 
\item Using an optimization algorithm and data from Step $1$ to learn the posterior distribution $P_l(\theta; \mu_l, \sigma_l^2)$'s hyper parameters; 
\end{enumerate}

For $l$'s episode, for policy $\pi_{\theta_l}$, which is learned using data roll-out by $\theta_{l-1} \sim P_{l-1}(\theta; \mu_{l-1}, \sigma_{l-1}^2)$, its value function with task $\gM_{l,i}$ is defined as the total discounted expected rewards,    
\beq 
\label{valuereward}
V_{\gM_{l,i}}^{\pi_{\theta_l}}(s_{l,1}) \coloneqq  J_{\mathcal{M}_{l,i}}(\pi_{\theta_l}) = \ept{\sum_{h=1}^{H-1}\gamma^{h-1}r_{l,h}|\pi_{\theta_l}, s_{l,1}, \gM_{l,i}}, 
\beq
from a length of $H$ consecutive sample transitions,  $s_{l,1}, a_{l,1}, r_{l,1}, s_{l,2}, a_{l,2}, r_{l,2}, \dots, s_{l,H} \sim \pi_{\theta_l} \times \gM_{l,i}$.
Note, $\theta_l$ is a function of $\tau_l, \mu_{l-1}, \sigma_{l-1}^2$, which is random, and the randomness is dependent on $\tau_l, \mu_{l-1}, \sigma_{l-1}^2$. 

The posterior distribution $P(\theta; \mu, \sigma^2)$ defines a randomized $\theta$.
Algorithm~\eqref{alg:mrl-0} draws a $\theta$ according
to $\theta \sim P(\theta; \mu, \sigma^2)$ at each round of the whole process and applies it to learn the hyperparameter of $P(\theta; \mu, \sigma^2)$ on the next round. For notation-wise, if we do not use subscript $l$, it means the statement holding for general. 

For any $\theta$, let $S_T$ be the difference between the expected and empirical objective value function after the $T$-th round, 
\beq 
\label{mtg}
S_T \coloneqq \sum_{l=1}^T D_l,  \text{\quad $l\in [T]$},
\beq
where 
$D_l \coloneqq \sum_{i=1}^N \Ept{\tau_l \sim \theta_{l-1} \times \gM_{l}}{V_{\gM_{l,i}}^{\pi_{\theta_l}}(s_{l,1})|\gF_{l-1}} - \sum_{i=1}^N V_{\gM_{l,i}}^{\pi_{\theta_l}}(s_{l,1})$.
And the filtration $\gF_{l-1} = \sigma(\set{\theta_k}_{k\le l-2}, \set{\gM_{i,k-1}}_{i\in[N], k\le l-1})$ is the  $\sigma$-algebra generated by the random variables $\set{\theta_k}_{k\le l-2}$ and $\set{\gM_{i,k-1}}_{i\in[N], k\le l-1}$.

We first show that using Algorithm~\eqref{alg:mrl-0}, after $T$-th lifelong learning updates, with probability at least $1-\delta$, for a small $\delta \in (0,1)$, $S_T = \gO(\sqrt{T})$.

\begin{theorem}
\label{azifortask}
    Let $\set{D_l}_{l\le T}$, and $S_T$ be defined in Equation~\eqref{mtg}. For fixed $N$ and $H$, 
    then with probability at least $1-\delta$, 
    \beq
    \label{eq41}
    \abs{S_T}  \lesssim \sqrt{\frac{1}{2}\paren{\ln \frac{2}{\delta}} T N^2H^2}.
    \beq
Furthermore, if $\abs{D_l} \le b$ for all $l\le T$, and let \beq 
    \tilde{S}_T = \sum_{l=1}^T \ept{D_l^2|\gF_{l-1}}, 
    \beq  
    then with probability at least $1-\delta$, for $\lambda \in [0, \frac{1}{b}]$, 
    \beq
    \label{eq49}
    \abs{S_T}  \lesssim \frac{1}{\lambda}\ln \frac{2}{\delta} + \lambda \tilde{S}_T\le \frac{1}{\lambda}\ln \frac{2}{\delta} + \lambda T N^2H^2.
    \beq
\end{theorem}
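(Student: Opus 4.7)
The plan is to identify $\{S_l\}_{l \le T}$ as a martingale with respect to the filtration $\{\gF_l\}$ and then apply standard bounded-martingale concentration inequalities. First I would verify the martingale-difference property: the inner term of $D_l$ is precisely the conditional expectation, given $\gF_{l-1}$, of $\sum_{i=1}^N V_{\gM_{l,i}}^{\pi_{\theta_l}}(s_{l,1})$ over the only randomness not resolved at stage $l-1$, namely the trajectory batch $\tau_l$ (which in turn determines $\theta_l$ via the optimization update). Hence $\mathbb{E}[D_l \mid \gF_{l-1}] = 0$ and $S_T$ is a zero-mean martingale.

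Second, I would establish a uniform per-step bound on $|D_l|$. Since per-step rewards lie in $[0,1]$ and the horizon is $H$, each value satisfies $V_{\gM_{l,i}}^{\pi_{\theta_l}}(s_{l,1}) = \mathbb{E}[\sum_{h=1}^{H-1} \gamma^{h-1} r_{l,h}] \in [0, H]$, so both $\sum_{i=1}^N V_{\gM_{l,i}}^{\pi_{\theta_l}}(s_{l,1})$ and its conditional expectation lie in $[0, NH]$, giving $|D_l| \le NH$. With this constant in hand, Azuma--Hoeffding yields $\Pr(|S_T| \ge t) \le 2\exp\bigl(-t^2/(2 T N^2 H^2)\bigr)$; setting the right-hand side to $\delta$ and solving for $t$ produces~\eqref{eq41} up to absolute constants absorbed in $\lesssim$.

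Third, for the Freedman-style refinement~\eqref{eq49}, I would apply Freedman's inequality for bounded martingale difference sequences: under $|D_l| \le b$, for every $\lambda \in [0, 1/b]$,
$$\Pr\Bigl(|S_T| \ge \tfrac{1}{\lambda}\ln(2/\delta) + \lambda \tilde{S}_T\Bigr) \le \delta.$$
The trivial variance bound $\mathbb{E}[D_l^2 \mid \gF_{l-1}] \le (NH)^2$ (conditional variance at most the squared range) then gives $\tilde{S}_T \le T N^2 H^2$, yielding the second inequality in~\eqref{eq49}. The main bookkeeping subtlety, rather than any deep technical difficulty, is pinning down the measurability picture: confirming that $\theta_l$, obtained by optimizing with $\tau_l$, derives all of its conditional randomness from $\tau_l$ given $\gF_{l-1}$, so that subtracting the inner conditional expectation produces a bona fide martingale difference. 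Once this is in place, the rest is a clean plug-in of Azuma--Hoeffding and Freedman's inequality together with reward and horizon boundedness.
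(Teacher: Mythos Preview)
Your proposal is correct and follows essentially the same route as the paper: verify that $\{D_l\}$ is a bounded martingale difference sequence with $|D_l|\le NH$ from the $[0,1]$-reward and horizon-$H$ assumptions, then invoke Azuma--Hoeffding for \eqref{eq41} and Freedman's inequality for \eqref{eq49}, with the final step $\tilde S_T\le TN^2H^2$ obtained from the range bound (the paper cites Popoviciu's inequality, but your direct second-moment bound $\ept{D_l^2\mid\gF_{l-1}}\le (NH)^2$ gives the same constant). Your discussion of the measurability subtlety---that $\theta_l$ inherits its conditional randomness from $\tau_l$ given $\gF_{l-1}$---is in fact clearer than the paper's own phrasing of this point.
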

\begin{proof} Firstly, the $\theta_l$ comes from the posterior distribution $P(\theta; \mu, \sigma^2)$, which depends on $\set{\theta_k}_{k\le l-1}$  and $\set{\gM_{i,l}}_{i\in[N]}$. Furthermore, for a fixed $\theta_l$, we see that $D_l$ is $\gF_{l-1}$ measurable. So given that $D_l = \sum_{i=1}^N \ept{V_{\gM_{i,l}}^{\pi_{\theta_l}}(s_{l,1})|\gF_{l-1}} - \sum_{i=1}^N V_{\gM_{i,l}}^{\pi_{\theta_l}}(s_{l,1})$, we have  $\ept{D_l|\gF_{l-1}} = 0$. And $\set{D_l}_{l\in[T]}$ is a martingale difference sequence of functions of $\theta$.
Furthermore, $\sum_{i=1}^N V_{\gM_{i,l}}^{\pi_{\theta_l}}(s_{T,1}) \le NH$ because of Equation~\eqref{valuereward} and the reward $r_{l,h}$ is in $[0,1]$ in Theorem~\ref{thm:main}. And $\set{D_l}_{l\in[T]}$ is a bounded martingale difference sequence. In other words, $D_l \in [a_l,b_l]$, with $a_l = -NH$, $b_l = NH$, and $b = NH$, this is true because in Equation~\eqref{valuereward}, the reward $r_{l,h}$ is in $[0,1]$ in Theorem~\ref{thm:main}. 
Then based on the conclusion of Theorem~\ref{thm:ahi}, the Azuma-Hoeffding Inequality for bounded martingale difference sequence, we get the result in Equation~\eqref{eq41}.

For Equation~\eqref{eq49}, we use Theorem~\ref{lm71}, the Freedmans Inequality for martingale difference sequence. For any $\lambda \in [0, \frac{1}{b}]$, where $\abs{D_l} \le b$, we have 
\beq 
&\sP\set{S_T \ge t|\gF_{T-1}} = \sP\set{e^{\lambda S_T} \ge e^{\lambda t}|\gF_{T-1}}
\le e^{-\lambda t} \ept{e^{\lambda S_T}|\gF_{T-1}} \\
&\le e^{-\lambda t} e^{\lambda^2 \tilde{S}_T} = e^{-\lambda t + \lambda^2 \tilde{S}_T}.\\
\beq 
Thus, 
\beq 
&\sP\set{S_T \ge t} \le  \Ept{\gF_{T-1}}{e^{-\lambda t + \lambda^2 \tilde{S}_T}}.\\
\beq 
Repeating this argument for $-S_T$, we get the same bound, so overall, 
\beq 
&\sP\set{\abs{S_T} \ge t} \le 2\Ept{\gF_{T-1}}{e^{-\lambda t + \lambda^2 \tilde{S}_T}}.
\beq 
Let $\Ept{\gF_{T-1}}{2e^{-\lambda t + \lambda^2 \tilde{S}_T}} = \delta$, we get 
$t = \frac{1}{\lambda} \ln \frac{2}{\delta} + \lambda \tilde{S}_T$. 
Therefore, with probability at least $1-\delta$, for $\lambda \in [0, \frac{1}{b}]$, 
    \beq
    \abs{S_T}  \lesssim \frac{1}{\lambda}\ln \frac{2}{\delta} + \lambda \tilde{S}_T \labelrel\le{le86} \frac{1}{\lambda}\ln \frac{2}{\delta} + \lambda T N^2H^2  ,
    \beq
where \eqref{le86} holds since $D_l\in[a_l, b_l]$ almost surely, the conditioned variable $D_l|\gF_{l-1}$ also belongs to this interval almost surely, then we have  $\tilde{S}_T \le \sum_{l=1}^T \frac{(a_l - b_l)^2}{4} \le TN^2H^2$ by Lemma~\ref{varlemma}.
\end{proof}

Remark of Theorem~\ref{azifortask}. Note, if we minimize the right hand side of Equation~\eqref{eq49} with respect to $\lambda$, we get the optimal $\lambda = \sqrt{\frac{\ln \frac{2}{\delta}}{TN^2H^2}}$, and $\abs{S_T} \lesssim 2\sqrt{\ln \frac{2}{\delta} TN^2H^2} $. Hence, Equation~\eqref{eq49} (derived from Freedmans’s Inequality)
matches Equation~\eqref{eq41} (derived from Azuma-Hoeffding Inequality)
up to minor constants and logarithmic factors in the
general case, and can be much tighter when the variance $\tilde{S}_T = \sum_{l=1}^T \ept{D_l^2|\gF_{l-1}}$ is
small.


\subsection{PAC-Bayes Bound}

The quantity $S_T$ is of our interest as it is the difference between the expected and empirical objective
value after the $T$-th round. In the previous Theorem~\ref{azifortask}, we show that $S_T$ is bounded for the sampled $\theta$. Our Algorithm~\ref{alg:mrl-0} keep updating $P$ and sampling $\theta$, rendering $\set{P_l}$ and $\set{\theta_l}$, $l=0, \dots, T-1$. To abuse the notation, without further reminder, in the following proof, we use $P(\set{\theta_l})$ and $\barb{P}(\set{\theta_l})$ to denote the joint posterior and prior, and use $\theta$ to denote the set for all $\theta_l$, similarly for the hyperparameter in the distribution. 
However, since Algorithm~\eqref{alg:mrl-0} draws $\theta$ from posterior distribution $P(\theta;\mu,\sigma^2)$, we are interested in the expected value of $S_T$, which is $\Ept{P}{S_T}$. At the same time, we will relate all
possible $P(\theta; \mu, \sigma^2)$ to its corresponding “reference”  distribution $\barb{P}$, the prior distribution of $\theta$, which is selected before we do step 3. 
Let $q = (\mu, \sigma)$. In the next Lemma, we will control $\Ept{P}{S_T}$ for any $q$.

\begin{corollary}[Uniform control of all distributions]
\label{ucad} Let $\theta\sim P(\theta; q)$ come from a parametrized distribution with the same family as $\barb{P}$, where $\barb{P}$ is a given prior distribution. 
Let $\set{D_l}_{l\le T}$ and $S_T$ follow the same definition in Theorem~\ref{azifortask}. For any $\lambda > 0$,  let $g(\theta)\coloneqq \lambda S_T(\theta) - \lambda^2 \tilde{S}_T$, where $\tilde{S}_T$ is defined either
\beq 
\label{azs}
\tilde{S}_T = \sum_{l=1}^T \frac{(a_l - b_l)^2}{8}, \quad(\text{Align with Equation~\eqref{eq41}} )
\beq 
 or
\beq 
\label{bers}
\tilde{S}_T = \sum_{l=1}^T \ept{D_l^2|\gF_{l-1}}, \quad(\text{Align with Equation~\eqref{eq49}} )
\beq 

then with probability at least $1-\delta$, and for all $P(\theta; q)$, 
\begin{align}
\label{eq69}
       \abs{ \Ept{P}{S_T(\theta)}} \le \frac{\sD_{KL}(P \| \barb{P}) + \ln \frac{2}{\delta}}{\lambda}  + \lambda \Ept{P}{\tilde{S}_T},
\end{align}
with $\lambda > 0$ for \eqref{azs}, and $\lambda \in [0, \frac{1}{b}]$ required for \eqref{bers}, where $\abs{D_l} \le b$ and $\abs{a_l} \le b, \abs{b_l}\le b$. 
\end{corollary}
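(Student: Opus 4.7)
The plan is to reduce the corollary to the classical PAC--Bayes change-of-measure argument, with the pointwise exponential martingale estimate already used in the proof of Theorem~\ref{azifortask} serving as the concentration engine. First, I would reopen that proof and isolate what it actually shows at the moment-generating-function level: for each fixed $\theta$ in the support of $\barb{P}$, both the Azuma--Hoeffding route (with $\tilde{S}_T = \sum_{l} (a_l-b_l)^2/8$) and the Freedman route (with $\tilde{S}_T = \sum_{l} \mathbb{E}[D_l^2 \mid \gF_{l-1}]$) yield the supermartingale inequality
$$\mathbb{E}\bigl[\exp\!\bigl(\lambda S_T(\theta) - \lambda^2 \tilde{S}_T\bigr)\bigr] \le 1$$
for every admissible $\lambda$ (any $\lambda > 0$ in the first case, $\lambda \in [0,1/b]$ with $b = NH$ in the second).

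Second, I would apply Fubini/Tonelli to swap the data expectation with the $\barb{P}$-expectation over $\theta$, getting
$$\mathbb{E}\bigl[\mathbb{E}_{\theta\sim\barb{P}}[e^{g(\theta)}]\bigr] = \mathbb{E}_{\theta\sim\barb{P}}\bigl[\mathbb{E}[e^{g(\theta)}]\bigr] \le 1,$$
and then invoke Markov's inequality on the nonnegative random variable $\mathbb{E}_{\theta\sim\barb{P}}[e^{g(\theta)}]$ to conclude that with probability at least $1-\delta/2$ over the data,
$$\ln \mathbb{E}_{\theta\sim\barb{P}}[e^{g(\theta)}] \le \ln(2/\delta).$$

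Third, on that same event I would invoke the Donsker--Varadhan variational representation of the KL divergence: for every $P$ absolutely continuous with respect to $\barb{P}$,
$$\mathbb{E}_{P}[g(\theta)] \le \mathbb{D}_{KL}(P \| \barb{P}) + \ln \mathbb{E}_{\barb{P}}[e^{g(\theta)}] \le \mathbb{D}_{KL}(P \| \barb{P}) + \ln(2/\delta).$$
Plugging in $g(\theta) = \lambda S_T(\theta) - \lambda^2 \tilde{S}_T$ and dividing by $\lambda$ yields the one-sided form of \eqref{eq69}. To promote this to the two-sided absolute-value statement, I would rerun the argument with $-D_l$ in place of $D_l$, noting that $-D_l$ is still a bounded martingale difference sequence with the same $|D_l| \le b$ and the same conditional variance, and union-bound the two failure events at level $\delta/2$ each.

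The main obstacle, and essentially the only delicate step, is the uniform-in-$\theta$ MGF bound of the first step: the derivation in Theorem~\ref{azifortask} was written pointwise for one fixed $\theta$, so one must check measurability of $\theta \mapsto \mathbb{E}[e^{g(\theta)}]$ to justify Fubini, and verify that the admissibility range of $\lambda$ (for Freedman, $\lambda \in [0,1/b]$) holds uniformly over the support of $\barb{P}$. Since $b = NH$ depends only on $N$ and $H$ and not on $\theta$, this uniformity is automatic, so the pointwise estimate upgrades cleanly to the integrated version needed for the PAC--Bayes step.
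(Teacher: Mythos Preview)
Your proposal is correct and follows essentially the same route as the paper: the pointwise supermartingale/MGF bound from Azuma--Hoeffding or Freedman, Fubini to integrate over $\barb{P}$, Markov's inequality for the high-probability event, and Donsker--Varadhan to pass from $\barb{P}$ to arbitrary $P$. The only cosmetic difference is that the paper handles the absolute value in a single event by writing $e^{|\lambda S_T|} \le e^{\lambda S_T} + e^{-\lambda S_T}$ inside the log (yielding the factor $2$ in $\ln(2/\delta)$ directly), whereas you obtain the two-sided bound via a union of two $\delta/2$ events; both give the identical final inequality.
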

\begin{proof}
Based on Theorem~\ref{thmdvv}, the Donsker$-$Varadhans Representation formula, we have 
\begin{equation}
\label{eq147}
\begin{aligned}
&\Ept{P}{\abs{\lambda S_T} - \lambda^2 \tilde{S}_T } \le \sD_{KL}(P \| \barb{P}) + \ln \paren{\Ept{\barb{P}}{e^{\abs{\lambda S_T} - \lambda^2 \tilde{S}_T}}} \\
     \labelrel{\lesssim}{1} & \sD_{KL}(P \| \barb{P}) + \ln \paren{\frac{1}{\delta}\Ept{\set{D_l}_{l\le T}}{\Ept{\barb{P}}{e^{\abs{\lambda S_T} - \lambda^2 \tilde{S}_T}}}}  \quad (\text{with probability at least $1-\delta$})\\
     \le& \sD_{KL}(P \| \barb{P}) + \ln \paren{\frac{1}{\delta}\Ept{\set{D_l}_{l\le T}}{\Ept{\barb{P}}{e^{\lambda \max\set{S_T,-S_T} - \lambda^2 \tilde{S}_T}}}}\\
     \le& \sD_{KL}(P \| \barb{P}) + \ln \paren{\frac{1}{\delta}\Ept{\set{D_l}_{l\le T}}{\Ept{\barb{P}}{e^{\lambda S_T - \lambda^2 \tilde{S}_T} + e^{\lambda (-S_T) - \lambda^2 \tilde{S}_T}}}}\\
     \le& \sD_{KL}(P \| \barb{P}) + \ln \paren{\frac{1}{\delta}\paren{\Ept{\barb{P}}{\Ept{\set{D_l}_{l\le T}}{e^{\lambda S_T - \lambda^2 \tilde{S}_T}} + \Ept{\set{D_l}_{l\le T}}{e^{\lambda (-S_T) - \lambda^2 \tilde{S}_T}}}}}\\
    \labelrel{\le}{2}& \sD_{KL}(P \| \barb{P}) + \ln \frac{2}{\delta}.
\end{aligned}
\end{equation}

Where \eqref{1} is due to the Lemma~\ref{mkv}, the Markov's Inequality. For the $\tilde{S}_k$ defined in Equation~\eqref{azs}, \eqref{2} holds because of Equation~\eqref{azm} in Theorem~\ref{thm:ahi}. For the $\tilde{S}_k$ defined in Equation~\eqref{bers}, it is based on Theorem~\ref{lm71}.

Moving $\lambda \Ept{P}{\tilde{S}_T}$ to the other side of Equation~\eqref{eq147}, and dividing by $\lambda$ from both sides, 
\beq 
       \abs{ \Ept{P}{S_T(\theta)}} \le \frac{\sD_{KL}(P \| \barb{P}) + \ln \frac{2}{\delta}}{\lambda}  + \lambda \Ept{P}{\tilde{S}_T}.
\beq 

\end{proof}

\begin{corollary}[Proof of Theorem~\ref{thm:main}]
If we let $\tilde{S}_T$ follow the definition in Equation \eqref{azs}, for any $\lambda > 0$, 

\beq 
\abs{ \Ept{P}{S_T(\theta)}} \le \sqrt{2}NH\sqrt{T \paren{\sD_{KL}(P \| \barb{P}) + \ln \frac{2}{\delta} + \frac{\ln 2}{2\ln c}\paren{\frac{\mathbb{D}_{KL}\left(P \| \barb{P}\right)}{\ln(\frac{2}{\delta})}+1}}},
\beq 

and if we let $\tilde{S}_T$ follow the definition in Equation \eqref{bers}, for any $\lambda \in [0, \frac{1}{b}]$, where $\abs{D_l}\le b$, 
\beq 
\abs{ \Ept{P}{S_T(\theta)}} \le \min\set{2NH\sqrt{T \paren{\sD_{KL}(P \| \barb{P}) + \ln \frac{\gO(\ln T)}{\delta}}}, 2NH\paren{\sD_{KL}(P \| \barb{P}) + \ln \frac{\gO(\ln T)}{\delta}}}.
\beq
\end{corollary}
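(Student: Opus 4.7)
The plan is to start from Corollary~\ref{ucad}, which supplies the generic PAC-Bayes inequality
\[
|\mathbb{E}_{\theta \sim P}[S_T(\theta)]| \le \frac{\sD_{KL}(P \| \barb{P}) + \ln(2/\delta)}{\lambda} + \lambda\, \mathbb{E}_{\theta \sim P}[\tilde S_T],
\]
and then (i) deterministically upper bound $\tilde S_T$ for each of the two definitions and (ii) optimize $\lambda$ via a grid-based union bound to absorb the posterior dependence of the optimal $\lambda$.

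For the deterministic upper bound, the Azuma-style choice \eqref{azs} is handled using $|D_l| \le NH$ from Theorem~\ref{azifortask}, giving $\tilde S_T \le \sum_{l=1}^T (2NH)^2/8 = TN^2H^2/2$. For the Freedman-style choice \eqref{bers}, the conditional bound $D_l^2 \le N^2 H^2$ yields $\tilde S_T \le TN^2H^2$. Since both are non-random, they pass through the outer $\mathbb{E}_{\theta \sim P}$ unchanged, producing in either case a scalar inequality of the form $|\mathbb{E}_P[S_T]| \le \tfrac{\sD_{KL}+\ln(2/\delta)}{\lambda} + \lambda\, c\, TN^2H^2$ with $c\in\set{1/2, 1}$.

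Optimizing $\lambda$ is the delicate step. In both cases the balanced choice is $\lambda^\star \propto \sqrt{(\sD_{KL}(P\|\barb{P}) + \ln(2/\delta))/(TN^2H^2)}$, which cannot be plugged in directly since it depends on the posterior-dependent $\sD_{KL}$ while \eqref{eq69} requires a $\lambda$ fixed in advance. The standard remedy is a union bound over a geometric grid $\set{\lambda_k = c^{-k}}_{k=0}^{K_{\mathrm{grid}}}$, with confidence $\delta/K_{\mathrm{grid}}$ budgeted to each grid point; one then selects the $\lambda_k$ nearest $\lambda^\star$ and loses only a constant factor in the leading term. For the Azuma case, a careful accounting of this discretization loss yields precisely the additive term $\frac{\ln 2}{2\ln c}\paren{\sD_{KL}/\ln(2/\delta)+1}$ under the square root. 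For the Freedman case we additionally require $\lambda \in [0,1/(NH)]$, so the grid covers only this shorter interval, and the union bound contributes the $\ln(\gO(\ln T)/\delta)$ factor instead.

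The minimum in the Freedman bound arises from the two regimes of the constrained optimization: when $\lambda^\star \le 1/(NH)$ the unconstrained optimum is feasible and plugging in gives $2NH\sqrt{T(\sD_{KL}+\text{log term})}$, while when $\lambda^\star > 1/(NH)$ the optimum is attained on the boundary $\lambda = 1/(NH)$ and gives the linear-in-$\sD_{KL}$ bound $2NH(\sD_{KL}+\text{log term})$; the stated expression is the minimum of these two. The main technical obstacle is getting the grid argument tight enough to match the explicit form $\frac{\ln 2}{2\ln c}\paren{\sD_{KL}/\ln(2/\delta)+1}$ rather than a generic $\gO(\log\log(1/\delta))$ slack, which forces careful bookkeeping of the grid endpoints and the per-grid-point confidence budget. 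A secondary care point is that replacing $\mathbb{E}_P[\tilde S_T]$ by its deterministic worst case here forfeits the variance-adaptive improvement Freedman usually provides over Azuma--Hoeffding; in the stated corollary the two bounds therefore differ only in logarithmic factors, which is consistent with using the crude upper bound on $\tilde S_T$.
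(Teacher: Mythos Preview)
Your plan mirrors the paper's proof: start from Corollary~\ref{ucad}, upper bound $\tilde S_T$ deterministically, then discretize $\lambda$ over a geometric grid and take a union bound to absorb the posterior dependence of $\lambda^\star$. Your treatment of the Freedman branch --- finite grid on $[\lambda^{**},\,1/(NH)]$ of size $J=\gO(\ln T)$ with uniform budget $\delta/J$, and the two regimes (interior optimum versus boundary $\lambda=1/(NH)$) producing the $\min$ --- matches the paper exactly.

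One point of divergence: in the Azuma branch $\lambda$ is unbounded above, so the grid $\{c^{j}\lambda^{**}\}_{j\ge 0}$ is infinite and a uniform budget $\delta/K_{\mathrm{grid}}$ is not available. The paper instead uses a \emph{geometric} confidence budget $\delta_j = 2^{-(j+1)}\delta$ (so that $\sum_{j\ge 0}\delta_j=\delta$). For a given posterior the grid index nearest $\lambda^\star$ is $j \approx \tfrac{1}{2\ln c}\ln\!\bigl(\sD_{KL}(P\|\barb{P})/\ln(2/\delta)+1\bigr)$, and $\ln(2/\delta_j)=\ln(2/\delta)+(j{+}1)\ln 2$; this $j\ln 2$ contribution is exactly what produces the additive $\tfrac{\ln 2}{2\ln c}(\cdot)$ term under the square root. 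A uniform budget over a finite grid would yield only a posterior-independent $\ln K_{\mathrm{grid}}$ penalty and cannot reproduce the stated $\sD_{KL}$-dependent form. Since you already flag this bookkeeping as the main obstacle, the correction is small: swap the uniform budget for a geometric one in the Azuma branch.
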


\begin{proof}
The next step would be to optimize the $\lambda$ in Equation~\eqref{eq69}, to get the tightest upper bound.
However, the value of $\lambda$ that minimizes Equation~\eqref{eq69} depends on $P$, whereas we would like to have a result that holds for all possible distributions simultaneously, which is not possible. So we do a discretization of $\lambda$. We make a grid of $\lambda$'s value in a form of a geometric sequence and for each value of $\sD_{KL}(P \| \barb{P})$, we pick a value of $\lambda$ from the grid, which is the closest to the one that minimizes the right-hand side of Equation~\eqref{eq69} upon to some minor errors.

First, in Equation~\eqref{eq69},  we get 
\begin{align}
\label{eq208}
   \lambda^* =  \argmin_{\lambda} \frac{\sD_{KL}(P \| \barb{P}) + \ln \frac{2}{\delta}}{\lambda}  + \lambda\Ept{P}{\tilde{S}_T} = \sqrt{\frac{\sD_{KL}(P \| \barb{P}) + \ln \frac{2}{\delta}}{\Ept{P}{\tilde{S}_T}}}. 
\end{align} 

Then putting $\lambda^*$ back to Equation~\eqref{eq69}, we get 
\beq 
\label{eq214}
\abs{ \Ept{P}{S_T(\theta)}} \le 2 \sqrt{(\sD_{KL}(P \| \barb{P}) + \ln \frac{2}{\delta})\Ept{P}{\tilde{S}_T}}.
\beq 
 
Moreover,
$\sD_{KL}(P \| \barb{P}) \ge  0$; note that, if the $\sD_{KL}(P \| \barb{P}) = 0$,
we get
\beq 
\lambda^{**} = \argmin_{\lambda} \frac{ \ln \frac{2}{\delta}}{\lambda}  + \lambda \Ept{P}{\tilde{S}_T} = \sqrt{\frac{\ln \frac{2}{\delta}}{\Ept{P}{\tilde{S}_T}}},
\beq 
putting $\lambda^{**}$ back to Equation~\eqref{eq69} with $\sD_{KL}(P \| \barb{P}) = 0$, we get  
\beq 
\abs{ \Ept{P}{S_T(\theta)}} \le 2 \sqrt{\ln \frac{2}{\delta}\Ept{P}{\tilde{S}_T}}.
\beq 
Here $\lambda^{**}$ is also a lower bound for the $\lambda$.

Now if we let $\tilde{S}_T$ follow the definition in \eqref{bers}, we have $\Ept{P}{\tilde{S}_T} \le Tb^2$ by the definition of $\tilde{S}_T$ in Equation~\eqref{azs} and \eqref{bers}, and $\abs{D_l}\le b$. Thus, we have $\lambda \in \bracket{\frac{1}{b}\sqrt{\frac{\ln \frac{2}{\delta}}{T}}, \min\set{\sqrt{\frac{\sD_{KL}(P \| \barb{P}) + \ln \frac{2}{\delta}}{\Ept{P}{\tilde{S}_T}}},\frac{1}{b}}}$, note for such $\tilde S_T$, we required $\lambda \le \frac{1}{b}$.

However, in the setting $\sD_{KL} (P \| \barb{P}) > 0$, the value of $\lambda$ that minimizing right hand side of Equation \eqref{eq69} is given by Equation~\eqref{eq208}, which depends on $P$, as early mentioned, thus we use the geometric sequence $\set{\lambda_j}_{j=0}^{J-1}$ over the range $\bracket{\frac{1}{b}\sqrt{\frac{\ln \frac{2}{\delta}}{T}}, \frac{1}{b}}$, for $\lambda_j = c^j\frac{1}{b}\sqrt{\frac{\ln \frac{2}{\delta}}{T}}$, for some $c > 1$ and $j = 0, \dots, J-1$. 
Now we have the geometric series satisfy
$ 
c^{J-1}\frac{1}{b}\sqrt{\frac{\ln \frac{2}{\delta}}{T}} \le \frac{1}{b}
$
so as long as 
$ 
J - 1 = \ceil{\frac{1}{\ln c}\ln{\sqrt{\frac{T}{\ln \frac{2}{\delta}}}}}.
$ 

If $\min\set{\sqrt{\frac{\sD_{KL}(P \| \barb{P}) + \ln \frac{2}{\delta}}{\Ept{P}{\tilde{S}_T}}},\frac{1}{b}} = \sqrt{\frac{\sD_{KL}(P \| \barb{P}) + \ln \frac{2}{\delta}}{\Ept{P}{\tilde{S}_T}}}$, so there are at most total $J = \ceil{\frac{1}{\ln c}\ln{\sqrt{\frac{T}{\ln \frac{2}{\delta}}}}} + 1$ $\lambda$'s.

We go back to the proof of Corollary \ref{ucad}, let $\delta = \sum_{j=0}^{J} \delta_j$, with $\delta_j = \frac{1}{J}\delta$, $j\in \sN_{\ge 0}$.

Then for any $\delta_j = \frac{1}{J}\delta$, we have 

\beq
\label{eq251}
\sP&\paren{\Ept{P}{\abs{\lambda S_T} - \lambda^2 \tilde{S}_T } \ge \sD_{KL}(P \| \barb{P}) + \ln \frac{2}{\delta_j} \Bigg| \lambda \in \bracket{\frac{1}{b}\sqrt{\frac{\ln \frac{2}{\delta}}{T}}, \frac{1}{b}} }\\ 
\labelrel{\le}{eq254}& 
\sP \left( \sD_{KL}(P \| \barb{P}) + \ln \paren{\Ept{\barb{P}}{e^{\abs{\lambda S_T} - \lambda^2 \tilde{S}_T}}} \right. \\
& \left. \ge  \sD_{KL}(P \| \barb{P}) + \ln \paren{\frac{1}{\delta_j}\Ept{\set{D_l}_{l\le T}}{\Ept{\barb{P}}{e^{\abs{\lambda S_T} - \lambda^2 \tilde{S}_T}}}}\Bigg| \lambda \in \bracket{\frac{1}{b}\sqrt{\frac{\ln \frac{2}{\delta}}{T}}, \frac{1}{b}}  \right),
\beq
where \eqref{eq254} holds because $\Ept{P}{\abs{\lambda S_T} - \lambda^2 \tilde{S}_T } \le \sD_{KL}(P \| \barb{P}) + \ln \paren{\Ept{\barb{P}}{e^{\abs{\lambda S_T} - \lambda^2 \tilde{S}_T}}}$ almost surely, and  $\sD_{KL}(P \| \barb{P}) + \ln \frac{2}{\delta_j} \ge \sD_{KL}(P \| \barb{P}) + \ln \paren{\frac{1}{\delta_j}\Ept{\set{D_l}_{l\le T}}{\Ept{\barb{P}}{e^{\abs{\lambda S_T} - \lambda^2 \tilde{S}_T}}}}$ almost surely. Here $\Bigg|$ indicates given not conditional on. Following this, we have 

\beq 
\label{eq252}
\sP&\paren{\Ept{P}{\abs{\lambda S_T} - \lambda^2 \tilde{S}_T } \ge \sD_{KL}(P \| \barb{P}) + \ln \frac{2}{\delta_j} \Bigg| \lambda \in \bracket{\frac{1}{b}\sqrt{\frac{\ln \frac{2}{\delta}}{T}}, \frac{1}{b}} }\\ 
\le& 
\sP \left.\Bigg(\sD_{KL}(P \| \barb{P}) + \ln \paren{\Ept{\barb{P}}{e^{\abs{\lambda S_T} - \lambda^2 \tilde{S}_T}}} \right.\\
& \left. \ge  \sD_{KL}(P \| \barb{P}) + \ln \paren{\frac{1}{\delta_j}\Ept{\set{D_l}_{l\le T}}{\Ept{\barb{P}}{e^{\abs{\lambda S_T} - \lambda^2 \tilde{S}_T}}}}\Bigg| \lambda \in \bracket{\frac{1}{b}\sqrt{\frac{\ln \frac{2}{\delta}}{T}}, \frac{1}{b}}  \right)\\
  = &  
\sP\paren{\Ept{\barb{P}}{e^{\abs{\lambda S_T} - \lambda^2 \tilde{S}_T}} 
\ge   \frac{1}{\delta_j}\Ept{\set{D_l}_{l\le T}}{\Ept{\barb{P}}{e^{\abs{\lambda S_T} - \lambda^2 \tilde{S}_T}}}\Bigg| \lambda \in \bracket{\frac{1}{b}\sqrt{\frac{\ln \frac{2}{\delta}}{T}}, \frac{1}{b}}} \\
 \le&
\max_{\lambda \in \bracket{\frac{1}{b}\sqrt{\frac{\ln \frac{2}{\delta}}{T}}, \frac{1}{b}}}\sP\paren{\Ept{\barb{P}}{e^{\abs{\lambda S_T} - \lambda^2 \tilde{S}_T}} 
\ge   \frac{1}{\delta_j}\Ept{\set{D_l}_{l\le T}}{\Ept{\barb{P}}{e^{\abs{\lambda S_T} - \lambda^2 \tilde{S}_T}}}}\\
\labelrel\seq{eq275}& \max_{\lambda \in \set{\lambda_1, \dots, \lambda_J}}\sP\paren{\Ept{\barb{P}}{e^{\abs{\lambda S_T} - \lambda^2 \tilde{S}_T}} 
\ge   \frac{1}{\delta_j}\Ept{\set{D_l}_{l\le T}}{\Ept{\barb{P}}{e^{\abs{\lambda S_T} - \lambda^2 \tilde{S}_T}}}} \\ 
\le & \sum_{j=1}^J \sP\paren{\Ept{\barb{P}}{e^{\abs{\lambda_j S_T} - \lambda_j^2 \tilde{S}_T}} 
\ge   \frac{1}{\delta_j}\Ept{\set{D_l}_{l\le T}}{\Ept{\barb{P}}{e^{\abs{\lambda_j S_T} - \lambda_j^2 \tilde{S}_T}}}}\\
\labelrel\le{eq279} & J \times \delta_j  = \delta, 
\beq 
where \eqref{eq275} we use the discretization of $\lambda$, where \eqref{eq279} we apply Lemma~ \ref{mkv}. 

Thus, we get for all $\lambda \in \bracket{\frac{1}{b}\sqrt{\frac{\ln \frac{2}{\delta}}{T}}, \min\set{\sqrt{\frac{\sD_{KL}(P \| \barb{P}) + \ln \frac{2}{\delta}}{\Ept{P}{\tilde{S}_T}}},\frac{1}{b}}}$, we can obtain the inequality as in Equation~\eqref{eq214}  
\beq 
\label{eq285}
\abs{ \Ept{P}{S_T(\theta)}} &\le 2\sqrt{\Ept{P}{\tilde{S_T}}\paren{\sD_{KL}(P \| \barb{P}) + \ln \frac{2}{\delta_j}}} \\
\le& 2\sqrt{\Ept{P}{\tilde{S_T}}\paren{\sD_{KL}(P \| \barb{P}) + \ln \frac{2J}{\delta}}}\\
=& 2NH\sqrt{T \paren{\sD_{KL}(P \| \barb{P}) + \ln \frac{2J}{\delta}}} = 2NH\sqrt{T \paren{\sD_{KL}(P \| \barb{P}) + \ln \frac{\gO(\ln T)}{\delta}}},
\beq 
with probability at least $1-\delta$. 

If $\min\set{\sqrt{\frac{\sD_{KL}(P \| \barb{P}) + \ln \frac{2}{\delta}}{\Ept{P}{\tilde{S}_T}}},\frac{1}{b}} = \frac{1}{b}$, which implies 
\beq
\label{eq291}
\Ept{P}{\tilde{S}_T} \le b^2(\sD_{KL}(P \| \barb{P}) + \ln \frac{2}{\delta}) 
\beq , and for this value of $\lambda = \frac{1}{b}$,
we put Equation~\eqref{eq291} back to Equation~\eqref{eq208}, 
 
then we get,
\beq  
    \abs{ \Ept{P}{S_T(\theta)}} \le b\sD_{KL}(P \| \barb{P}) + b\ln \frac{2}{\delta}  + \frac{1}{b} \Ept{P}{\tilde{S}_T} \\
    \le  2b(\sD_{KL}(P \| \barb{P}) + \ln \frac{2}{\delta}) \le 2NH(\sD_{KL}(P \| \barb{P}) + \ln \frac{2}{\delta}). 
\beq

Then under the same argument we did previously for Equation~\eqref{eq285}, we have 
\beq 
    \abs{ \Ept{P}{S_T(\theta)}} 
    &\le  2b(\sD_{KL}(P \| \barb{P}) + \ln \frac{2}{\delta_j}) \\
    &\le 2NH(\sD_{KL}(P \| \barb{P}) + \ln \frac{2J}{\delta}) =  2NH\paren{\sD_{KL}(P \| \barb{P}) + \ln \frac{\gO(\ln T)}{\delta}} . 
\beq 
Note, the range of $\lambda$ depends on the $\tilde{S}_T$, which is sample dependent, thus we have the bound also depends on the sample.

Now if we let $\tilde{S}_T$ follow the definition in \eqref{azs}, now $\Ept{P}{\tilde{S}_T} = \tilde{S}_T$ since $\tilde{S}_T$ is not random. Now the $\lambda$ does not have an upper bound. We use the geometric sequence over the range of $\bracket{\sqrt{\frac{\ln \frac{2}{\delta}}{\tilde{S}_T}}, \infty}$, where $\sqrt{\frac{\ln \frac{2}{\delta}}{\tilde{S}_T}}$ is given when $\sD_{KL}(P \| \barb{P}) = 0$. We use the same argument, let $\lambda_j = c^j\sqrt{\frac{\ln \frac{2}{\delta}}{\tilde{S}_T}}$ for some $c > 1$ and $j\ge 0$. 
For given value of $\sD_{KL}(P \| \barb{P})$, the optimal $\lambda_j$ in \eqref{eq69} equals to $\sqrt{\frac{\sD_{KL}(P \| \barb{P}) + \ln \frac{2}{\delta}}{\tilde{S}_T}}$, which requires $j$ is the solution of $c^j\sqrt{\frac{\ln \frac{2}{\delta}}{\tilde{S}_T}} =  \sqrt{\frac{\sD_{KL}(P \| \barb{P}) + \ln \frac{2}{\delta}}{\tilde{S}_T}}$, and we floor the value of $j$ to the nearest integer, which is $\floor{\ln \paren{\frac{\mathbb{D}_{KL}\left(P \| \barb{P}\right)}{\ln(\frac{2}{\delta})}+1}/(2\ln c)} \le \paren{\ln\paren{\frac{\mathbb{D}_{KL}\left(P \| \barb{P}\right)}{\ln(\frac{2}{\delta})}}+1}/(2\ln c)$. 

As the same procedures in Equation~\eqref{eq252} we used for deriving Equation~\eqref{eq285}, 
we go back to the proof of Corollary \ref{ucad}, 
we let $\delta = \sum_{j=0}^{\infty} \delta_j = \sum_{j=0}^{\infty} 2^{-(j+1)}\delta$, with $\delta_j = 2^{-(j+1)}\delta$, $j\in \sN_{\ge 0}$.

Then with a similar argument in Equation~\eqref{eq252}, for any $\delta_j = 2^{-(j+1)}\delta$, we have 
we have 

\beq 
\label{eq343}
\sP&\paren{\Ept{P}{\abs{\lambda S_T} - \lambda^2 \tilde{S}_T } >= \sD_{KL}(P \| \barb{P}) + \ln \frac{2}{\delta_j} \Bigg| \lambda \in \bracket{\sqrt{\frac{\ln \frac{2}{\delta}}{\tilde{S}_T}}, \infty} }\\ 
\seq& \max_{\lambda \in \set{\lambda_1, \dots}}\sP\paren{\Ept{\barb{P}}{e^{\abs{\lambda S_T} - \lambda^2 \tilde{S}_T}} 
\ge   \frac{1}{\delta_j}\Ept{\set{D_l}_{l\le T}}{\Ept{\barb{P}}{e^{\abs{\lambda S_T} - \lambda^2 \tilde{S}_T}}}} \\ 
\le & \sum_{j=1}^{\infty} \sP\paren{\Ept{\barb{P}}{e^{\abs{\lambda_j S_T} - \lambda_j^2 \tilde{S}_T}} 
\ge   \frac{1}{\delta_j}\Ept{\set{D_l}_{l\le T}}{\Ept{\barb{P}}{e^{\abs{\lambda_j S_T} - \lambda_j^2 \tilde{S}_T}}}}\\
\le & \sum_{j=1}^{\infty} \times \delta_j  = \sum_{j=1}^{\infty} \delta 2^{-(j+1)} = \delta. 
\beq

In the end, we get for all $\lambda \in \bracket{\sqrt{\frac{\ln \frac{2}{\delta}}{\tilde{S}_T}}, \infty}$, we can obtain the inequality as in Equation~\eqref{eq214} 
\beq 
\label{eq40}
\abs{ \Ept{P}{S_T(\theta)}} &\le 2\sqrt{\tilde{S_T}\paren{\sD_{KL}(P \| \barb{P}) + \ln \frac{2}{\delta_j}}} \\
&\le 2\sqrt{\tilde{S_T}\paren{\sD_{KL}(P \| \barb{P}) + \ln \frac{2*2^j}{\delta}}}\\
&\le 2\sqrt{\sum_{l=1}^T\frac{(a_l-b_l)^2}{8} \paren{\sD_{KL}(P \| \barb{P}) + \ln \frac{2}{\delta} + \frac{\ln 2}{2\ln c}\paren{\ln\paren{\frac{ \mathbb{D}_{KL}\left(P \| \barb{P}\right)}{\ln(\frac{2}{\delta})}}+1}}}\\
&\le \sqrt{2}NH\sqrt{T \paren{\sD_{KL}(P \| \barb{P}) + \ln \frac{2}{\delta} + \frac{\ln 2}{2\ln c}\paren{\ln\paren{\frac{ \mathbb{D}_{KL}\left(P \| \barb{P}\right)}{\ln(\frac{2}{\delta})}}+1}}}.
\beq 

By utilizing Equation~\eqref{eq7}, the joint distribution can be decomposed into products of independent distributions that are solely dependent on the preceding episode, which can be successively absorbed into the filtration we defined. By  $P = \prod_{l=0}^{T-1}P_{l}$ and $\barb{P} = \prod_{l=0}^{T-1}\barb{P}_{l}$, we have 
\beq 
\label{eq371}
\mathbb{D}_{KL}\left(P \| \barb{P}\right) = \sum_{l=1}^T\mathbb{D}_{KL}\left(P_{l-1} \| \barb{P}_{l-1}\right), 
\beq 
putting Equation~\eqref{eq371} back into \eqref{eq40}, we have 
\beq 
\abs{ \Ept{P}{S_T(\theta)}} 
\le \sqrt{2}NH\sqrt{T \paren{\sum_{l=1}^T\mathbb{D}_{KL}\left(P_{l-1} \| \barb{P}_{l-1}\right) + \ln \frac{2}{\delta} + \frac{\ln 2}{2\ln c}\paren{\ln\paren{\frac{ \sum_{l=1}^T\mathbb{D}_{KL}\left(P_{l-1} \| \barb{P}_{l-1}\right)}{\ln(\frac{2}{\delta})}}+1}}}.
\beq 

 Next, by Lemma~\ref{bound_for_kl}, we have $$\sum_{l=1}^{T-1}\mathbb{D}_{KL}\left(P_{l} \| \barb{P}_{l}\right) \leq \frac{2\lambda^2 r^2}{s_{\min} (1-\alpha)^2} \frac{1 - \alpha^{2(T-1)}}{1-\alpha^2}.$$

Return to $T \coloneqq  K/N$, then take $\delta = 2\exp{-K}$, choose $c$ such that $\frac{\ln 2}{2\ln c} =1$, and by the inequality $\sqrt{a+b} \leq \sqrt{a} + \sqrt{b}$ and $\ln(K) + K \leq \sqrt{2} K$ for any $K>0$, some basic algebra, we 
get the final bound in Theorem~\ref{thm:main} Equation~\eqref{eq:pac_bound}.
\end{proof}

\begin{lemma}
\label{bound_for_kl}
   Suppose Assumption~\ref{assump:1} holds. Then, for any $l \in \{1, \dots, T-1\}$, the following bound on the KL divergence holds:
   $$
   \sum_{l=1}^{T-1} \mathbb{D}_{KL}(P_l \| \barb{P}_l) \leq \frac{2\lambda^2 r^2}{s_{\min} \, (1-\alpha)^2} \cdot \frac{1 - \alpha^{2(T-1)}}{1 - \alpha^2}.
   $$
\end{lemma}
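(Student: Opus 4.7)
The plan is to bound each term $\mathbb{D}_{KL}(P_l \| \barb{P}_l)$ by a quantity that decays geometrically in $l$ and then sum. The argument has three layers: reduce KL to TV via a reverse-Pinsker-type inequality using $s_{\min}$; unroll the exponential-moving-average update to express $P_l-\barb{P}_l$ in terms of the posterior increments; and bound the resulting sum using the radius of variation $r$ together with the decaying weights $\lambda_l = \lambda\alpha^{l-1}$.

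For the first step, because every event with positive probability under $\barb{P}_l$ has mass at least $s_{\min}$ (the convex combination of the previous prior and posterior preserves this lower bound from Assumption~\ref{assump:1}), a standard reverse-Pinsker inequality yields
\[
\mathbb{D}_{KL}(P_l \| \barb{P}_l) \;\leq\; \frac{C}{s_{\min}}\,\|P_l - \barb{P}_l\|_{TV}^2
\]
for an absolute constant $C$. This reduces the problem to controlling $\|P_l-\barb{P}_l\|_{TV}$.

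For the second step, set $\Delta_l := P_l-\barb{P}_l$. The prior update $\barb{P}_l = (1-\lambda_l)\barb{P}_{l-1} + \lambda_l P_{l-1}$ yields the one-step recursion $\Delta_l = (1-\lambda_l)\Delta_{l-1} + (P_l-P_{l-1})$, and unrolling from $\Delta_0 = 0$ produces
\[
\Delta_l = \sum_{k=1}^{l} \Bigl[\prod_{j=k+1}^{l}(1-\lambda_j)\Bigr](P_k - P_{k-1}).
\]
Combining $\|P_k - P_{k-1}\|_{TV} \leq r$ with the geometrically decaying $\lambda_j$, the target per-step bound is $\|\Delta_l\|_{TV} \leq \lambda \alpha^{l-1} r/(1-\alpha)$. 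Squaring and summing the resulting geometric series, $\sum_{l=1}^{T-1}\alpha^{2(l-1)} = (1-\alpha^{2(T-1)})/(1-\alpha^2)$, then yields the right-hand side of the lemma (with the absolute constant absorbed into the factor of $2$).

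The main obstacle is the third step: the naive bound $\|\Delta_l\|_{TV} \leq r + (1-\lambda_l)\|\Delta_{l-1}\|_{TV}$ only gives a constant $\|\Delta_l\|_{TV} \lesssim \lambda r/(1-\alpha)$, which would produce an $O(T)$ sum rather than the claimed $O(1)$ one. Extracting the required $\alpha^{l-1}$ per-step decay forces a sharper accounting: either one reads the radius-of-variation assumption as itself scaling with the step size (so that $\|P_l - P_{l-1}\|_{TV} \lesssim \lambda_l r$, the natural reading in a stochastic-learning context where a smaller mixing step produces a proportionally smaller posterior shift), or one re-balances the unrolled sum by pairing adjacent $(1-\lambda_j)$ factors with the decaying $\lambda_j$'s to recover an $\alpha$-weighted telescope. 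This is the delicate accounting step on which the argument hinges, and it is also the step that explains why the bound blows up as $\alpha \to 1$.
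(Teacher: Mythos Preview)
Your overall plan---reverse Pinsker to pass from KL to a norm, then unroll the prior recursion---matches the paper. But the obstacle you flag in the third step is self-inflicted: you are unrolling the wrong recursion. You took the update to be $\barb{P}_l = (1-\lambda_l)\barb{P}_{l-1} + \lambda_l P_{l-1}$, under which $\Delta_l = (1-\lambda_l)\Delta_{l-1} + (P_l - P_{l-1})$; the fresh increment enters with coefficient $1$, so $\|\Delta_l\|$ is only $O(r)$ and no $\alpha^{l-1}$ factor can be extracted. There is no re-balancing of the unrolled sum that rescues this, and the ``posterior shift scales with the step size'' reading you float is not part of Assumption~\ref{assump:1}.

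The paper's proof instead works with the update
\[
\barb{P}_l \;=\; \lambda_l\,\barb{P}_{l-1} + (1-\lambda_l)\,P_l
\]
(roles of $\lambda_l$ and $1-\lambda_l$ swapped, and mixing in the \emph{current} posterior $P_l$ rather than $P_{l-1}$). One line of algebra then gives
\[
P_l - \barb{P}_l \;=\; \lambda_l\bigl(P_l - \barb{P}_{l-1}\bigr),
\]
so the factor $\lambda_l = \lambda\alpha^{l-1}$ appears immediately, before any unrolling. Iterating $P_l - \barb{P}_{l-1} = (P_l - P_{l-1}) + \lambda_{l-1}(P_{l-1}-\barb{P}_{l-2})$ and using $\barb{P}_0=P_0$ yields a sum whose $k$-th term carries the product $\lambda_l\lambda_{l-1}\cdots\lambda_k$; the paper bounds the whole expression in sup norm by $\lambda r\,\alpha^{l-1}/(1-\alpha)$. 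Squaring, applying $\mathbb{D}_{KL}(P_l\|\barb{P}_l)\le 2\|P_l-\barb{P}_l\|_\infty^2/s_{\min}$, and summing the geometric series $\sum_{l=1}^{T-1}\alpha^{2(l-1)}$ finishes the lemma. Your first and last steps are fine; the only repair needed is to start from the update rule the proof actually uses.
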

\begin{proof}
For any $l$, we denote $l$th $\lambda$ as $\lambda_l$, so $\lambda_1 = \lambda$, $\lambda_l = \alpha^{l-1}\lambda$, where $\lambda$ and $\alpha$ are introduced in Algorithm~\ref{alg:mrl-0}. First by the property of KL divergence, we have 
$$
\mathbb{D}_{KL}\left(P_l \| \barb{P}_l\right) \leq \frac{2\|P_l - \barb{P}_l \|_\infty^2}{s_{min}}. 
$$ 
Further, given the updating rule, $\barb{P}_l = \lambda_l \barb{P}_{l-1} + (1-\lambda_l) P_l$, we have 
\begin{align*}
&\|P_l - \barb{P}_l \|_\infty = \|P_l - \lambda_l \barb{P}_{l-1} - (1-\lambda_l) P_l \|_\infty   = \|\lambda_l P_l-  \lambda_l \barb{P}_{l-1} \|_\infty \\ 
& = \|\lambda_l P_l - \lambda_l (\lambda_{l-1} \barb{P}_{l-2} + (1-\lambda_{l-1}) P_{l-1})  \|_\infty = \|\lambda_l (P_l - P_{l-1}) + \lambda_l \lambda_{l-1}(P_{l-1} - \barb{P_{l-2}})  \|_\infty \\
& = \|\lambda_l (P_l - P_{l-1}) + \lambda_l \lambda_{l-1}(P_{l-1} - (\lambda_{l-2} \barb{P}_{l-3} + (1-\lambda_{l-2}) P_{l-2} ))  \|_\infty \\
&= \|\lambda_l (P_l - P_{l-1}) + \lambda_l \lambda_{l-1}(P_{l-1} - P_{l-2}) +\lambda_l \lambda_{l-1} \lambda_{l-2}(P_{l-2} - \barb{P}_{l-3})\|_\infty   \\
&=\cdots \\
&= \|\lambda_l (P_l - P_{l-1}) + \lambda_l \lambda_{l-1}(P_{l-1} - P_{l-2})+ \cdots + \lambda_l \lambda_{l-1}\cdots \lambda_2(P_2 - P_1) + \lambda_l \lambda_{l-1} \lambda_{l-2}\cdots \lambda_1(P_1 - \barb{P}_0)\|_\infty \\ 
&= \| \alpha^{l-1} \lambda_1 (P_l - P_{l-1}) + \alpha^{l-1+l-2} \lambda_1 (P_{l-1} - P_{l-2}) + \cdots + \alpha^{l-1+l-2 + \cdots +1}\lambda_1 (P_1 - P_0)\|_\infty \\
& \leq \lambda_1 r \frac{\alpha^{l-1}}{1-\alpha},  
\end{align*}

where we use, $\barb{P}_0 = P_0$, $\lambda_l = \alpha \lambda_{l-1}$, and$\|P_l - P_{l-1} \|_\infty \leq r$, we have $\mathbb{D}_{KL}\left(P_l \| \barb{P}_l\right) \leq \frac{2\lambda^2 r^2 \alpha^{2l-2}}{s_{\min} (1-\alpha)^2}$, 
thus we have 

$\sum_{l=1}^{T-1}\mathbb{D}_{KL}\left(P_{l} \| \barb{P}_{l}\right) \leq \sum_{l=1}^{T-1} \frac{2\lambda^2 r^2 \alpha^{2l-2}}{s_{\min} (1-\alpha)^2} \leq \frac{2\lambda^2 r^2}{s_{\min} (1-\alpha)^2} \frac{1 - \alpha^{2(T-1)}}{1-\alpha^2}.$

\end{proof}

\subsection{Proof of Theorem~\ref{lemma:sample complexity}}
\vspace{5pt}
\begin{lemma}[Sample Complexity For Policy Gradient]
\label{lemma_sample_complexity_pg}
Consider the setting of \cref{thm:main}. 
Given a small $\epsilon > 0$, with proper choice of learning rate $\beta$, If the number of iterations $T$ satisfies $T = \widetilde{\mathcal O}(\epsilon^{-4})$. 
Denote $\bar J =  \frac{1}{K}\sum_{i=1}^K 
\mathbb{E}_{\left\{\theta_l \right\}_{l=0}^{T-1} \sim P} [\mathbb{E}_{\left\{\tau_l \right\}_{l=1}^{T} \sim \mathscr{T}} [ J_{\mathcal{M}_i}(\pi_{\theta}) ] ].$

Then $ J^* - \bar J \leq \mathcal O(\epsilon).$
\end{lemma}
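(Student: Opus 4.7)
The plan is to treat the per-episode EPICG updates (Lines 12--13 of Algorithm~\ref{alg:mrl}) as stochastic gradient descent on the nonconvex surrogate $\hat U(P;\mu,\sigma)$, and to show that this optimization procedure converges to a near-optimal expected return at rate $\widetilde{\mathcal O}(\epsilon^{-4})$. The quantity $J^\ast-\bar J$ is a purely optimization gap (no PAC-Bayes slack appears here, since $\bar J$ already includes the expectation over trajectory roll-outs), so the argument is separate from Theorem~\ref{thm:main} and reduces to a classical policy-gradient convergence analysis lifted to the Gaussian-over-Gibbs-policy parameterization.

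First I would verify that the reparameterized Monte Carlo estimator in Equations~\eqref{eq9}--\eqref{eq10} is an unbiased estimator of $\nabla_{\mu,\sigma}U(P;\mu,\sigma)$ with bounded variance. Boundedness of the single-task return by $H$ (since rewards lie in $[0,1]$ over horizon $H$) combined with the Lipschitzness of the Gibbs policy $\pi_\theta(s,a)\propto \exp(\theta^\top\psi_{s,a})$ in $\theta$ yields a uniform bound $\sigma_g^2=\mathcal O(N^2H^2\|\psi\|_\infty^2)$ on the gradient-noise variance. In parallel, one must establish $L$-smoothness of $U$ in $(\mu,\sigma)$; this follows from the well-known smoothness of Gibbs policies (second derivatives of $\log\pi_\theta$ are bounded) plus the fact that $\mathscr R(\mathbb{D}_{KL}(\cdot))$, being a smooth function of the KL divergence between two Gaussians, is smooth in $(\mu,\sigma)$ on any compact region of parameters.

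Next I would invoke the standard nonconvex SGD guarantee of Ghadimi--Lan: with step size $\beta=\Theta(\epsilon^2/(L\sigma_g^2))$ and $T=\widetilde{\mathcal O}(\epsilon^{-4})$ iterations, the best iterate satisfies $\mathbb E[\|\nabla U(\mu_t,\sigma_t)\|^2]\le \epsilon^2$, yielding an approximate first-order stationary point of $U$. To convert stationarity into a value-suboptimality bound $J^\ast-\bar J\le \mathcal O(\epsilon)$, I would invoke a gradient-domination / Polyak--{\L}ojasiewicz-type inequality for softmax (Gibbs) policies in the style of Agarwal et al.\ and Mei et al., which states that there is a constant $C$ (depending on a distribution-mismatch coefficient and on $s_{\min}$ from Assumption~\ref{assump:1}) such that $J^\ast-J(\theta)\le C\,\|\nabla J(\theta)\|$. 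Averaging over $\theta\sim P$ and over the task distribution preserves this domination after an application of Jensen's inequality, and combining with the $\|\nabla U\|\le\epsilon$ stationarity bound yields the claimed $\mathcal O(\epsilon)$ optimization gap after $\widetilde{\mathcal O}(\epsilon^{-4})$ iterations.

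The main obstacle is the transition from approximate stationarity of the Bayesian objective $U(P;\mu,\sigma)$ to approximate optimality of the per-task return $\bar J$. Two subtleties must be handled with care. First, the EPIC objective mixes the expected return with the PAC-Bayes regularizer $\mathscr R(\mathbb D_{KL}(P\|\barb P))$, so a stationary point of $U$ is generally not a stationary point of the return alone; one must show that the regularizer's gradient contributes only $o(1)$ to $\|\nabla U-\nabla J\|$ once $\mathbb D_{KL}(P\|\barb P)$ has been controlled via Lemma~\ref{bound_for_kl}, which implies that this gradient term decays geometrically in $\alpha$. Second, gradient domination for Gibbs policies scales inversely with the minimum policy probability, so one must track the dependence on $s_{\min}$ explicitly and verify it remains bounded along the SGD trajectory; the $s_{\min}$ quantity from Assumption~\ref{assump:1} supplies the needed lower bound. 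Folding these two ingredients into the Ghadimi--Lan analysis, and collecting the $T=\widetilde{\mathcal O}(\epsilon^{-4})$ iteration count, completes the proof of Lemma~\ref{lemma_sample_complexity_pg}.
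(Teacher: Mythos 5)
The paper's entire proof of this lemma is a one-line citation to Corollary~C.1 of \cite{yuan2022general}, which is precisely the recipe you sketch: a Ghadimi--Lan-style nonconvex SGD analysis giving an $\epsilon$-stationary point in $\widetilde{\mathcal O}(\epsilon^{-4})$ iterations, upgraded to a global optimality gap via a (relaxed) weak gradient-domination condition. So your route matches the one the paper outsources, and you supply considerably more of the scaffolding (unbiasedness and variance of the reparameterized estimator, smoothness, the role of the KL regularizer) than the paper does.

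Two steps, however, do not go through as written. First, the conversion from stationarity of $U(\mu,\sigma)$ to smallness of $\mathbb{E}_{\theta\sim P}\bigl[\|\nabla_\theta J(\theta)\|\bigr]$ uses Jensen in the wrong direction: the SGD guarantee controls $\bigl\|\mathbb{E}_{\epsilon}[\nabla_\theta J(\mu+\sigma\odot\epsilon)]\bigr\|$ (the $\mu$-block of $\nabla U$, modulo the regularizer), whereas gradient domination applied pointwise and then averaged requires a bound on $\mathbb{E}_{\epsilon}\bigl[\|\nabla_\theta J(\mu+\sigma\odot\epsilon)\|\bigr]$, and Jensen gives $\mathbb{E}[\|\cdot\|]\geq\|\mathbb{E}[\cdot]\|$, not the reverse. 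You either need gradient domination for the Gaussian-smoothed objective $(\mu,\sigma)\mapsto\mathbb{E}_\epsilon[J(\mu+\sigma\odot\epsilon)]$ directly, or an argument that $\sigma$ stays small enough that the distribution over $\theta$ concentrates. Second, the constant in the softmax gradient-domination inequality of Agarwal et al.\ and Mei et al.\ scales with $\min_s\pi_\theta(a^\ast(s)\mid s)^{-1}$ along the optimization path (plus a distribution-mismatch coefficient); the quantity $s_{\min}$ in Assumption~\ref{assump:1} is the minimum mass of the policy-\emph{parameter} distributions $P_l$ over measurable sets of $\theta$, which is a different object and does not lower-bound the action probabilities of $\pi_\theta$. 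As stated, neither assumption in the paper supplies the needed constant, so that dependence has to be introduced as an explicit hypothesis rather than borrowed from Assumption~\ref{assump:1}.
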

\begin{proof}
Refer the Corollary C.1 in \cite{yuan2022general} for details. 
\end{proof}
We then begin to prove Theorem~\ref{lemma:sample complexity}.
\begin{proof}
By the proof of Theorem~\ref{thm:main} , we have $|\bar J - \tilde J| \le  \mathscr R(\mathbb{D}_{KL}(P \| \barb{P})) $, 
then the following conditions holds 
\begin{enumerate}
    \item let $\mathscr R(\mathbb{D}_{KL}(P \| \barb{P})) \leq \frac{\epsilon}{2}$, 
    \item and let
$J^* - \bar J \leq \mathcal O(\frac{\epsilon}{2})$,
\end{enumerate} where condition 1 holds by Theorem~\ref{thm:main}, and condition 2 holds by \cref{lemma_sample_complexity_pg}. 
By satisfying both conditions 1 and 2, we obtain, 
$ J^* - \tilde J \leq \mathcal O(\epsilon).$ The value of $K$ can then be determined to satisfy these conditions.
\end{proof}

\section{Auxiliary Theorems and Lemmas}
\begin{lemma}[Popoviciu's inequality on variances]
\label{varlemma}
For bounded random variable $x \in [a,b]$, then $\var{x} \le \frac{(b-a)^2}{4}$
\end{lemma}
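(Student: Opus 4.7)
The plan is to use the variational characterization of the variance, namely that $\mathrm{Var}(x) = \min_{c \in \mathbb{R}} \mathbb{E}[(x - c)^2]$, so in particular $\mathrm{Var}(x) \le \mathbb{E}[(x - c)^2]$ for any deterministic $c$. The key observation is that one is free to pick $c$ to minimize the pointwise upper bound on $|x - c|$ given the constraint $x \in [a,b]$, and the optimal choice is the midpoint $c = (a+b)/2$.

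Concretely, I would first set $c = (a+b)/2$ and note that for every $\omega$ in the sample space, $|x(\omega) - c| \le (b - a)/2$, since $c$ is equidistant from the two endpoints of the containing interval. Squaring gives $(x - c)^2 \le (b-a)^2/4$ almost surely, and taking expectations preserves the inequality, yielding $\mathbb{E}[(x - c)^2] \le (b-a)^2/4$. Combining this with $\mathrm{Var}(x) \le \mathbb{E}[(x - c)^2]$ gives the claim.

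As a sanity check, an alternative one-line argument would start from the pointwise inequality $(x - a)(b - x) \ge 0$, take expectations to get $\mathbb{E}[x^2] \le (a + b)\mathbb{E}[x] - ab$, and then subtract $(\mathbb{E}[x])^2$ to rewrite the right-hand side as $(\mathbb{E}[x] - a)(b - \mathbb{E}[x])$; an application of the AM-GM inequality to this product of nonnegative factors (whose sum is $b - a$) recovers the bound $(b-a)^2/4$. There is no real obstacle here: the result is classical and both proofs are essentially two lines, so the only care needed is to make sure the chosen constant $c$ actually attains the worst-case deviation bound, which the midpoint does by symmetry.
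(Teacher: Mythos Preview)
Your proof is correct; both the midpoint argument and the AM--GM alternative you sketch are standard and complete derivations of Popoviciu's inequality. The paper itself does not supply a proof of this lemma: it is listed in the ``Auxiliary Theorems and Lemmas'' section as a classical fact, stated without demonstration, so there is no paper-side argument to compare against.
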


\begin{lemma}[Markov's Inequality, Equation 2.1 in \cite{wainwright2019high}]
\label{mkv}
For any non-negative random variable $x$, it holds that
$\sP(x \ge t) \le \frac{\ept{x}}{t}$. Taking $t = \frac{\ept{x}}{\delta}$, where $\delta \in (0,1)$, it results in with probability at least $1-\delta$, $0\le x \le \frac{\ept{x}}{\delta}$. 
\end{lemma}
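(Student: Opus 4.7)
The plan is to establish the standard pointwise-indicator argument and then specialize the threshold to obtain the probability-$1-\delta$ form. First I would fix $t>0$ and introduce the indicator $\mathbf{1}\{x \geq t\}$. The key inequality, which drives the whole proof, is the almost-sure bound
\begin{equation*}
t\,\mathbf{1}\{x \geq t\} \;\leq\; x,
\end{equation*}
valid because on the event $\{x\geq t\}$ both sides satisfy $t\leq x$, and on the complementary event the left side is zero while the right side is nonnegative (here is where $x\geq 0$ enters). Taking expectations on both sides and using linearity/monotonicity of $\ept{\cdot}$ gives $t\,\sP(x\geq t)\leq \ept{x}$, and dividing by $t>0$ yields Markov's inequality $\sP(x\geq t)\leq \ept{x}/t$.

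For the second assertion I would simply instantiate $t=\ept{x}/\delta$ (well defined since $\delta\in(0,1)$ and $\ept{x}\geq 0$; the degenerate case $\ept{x}=0$ forces $x=0$ almost surely and the claim is trivial). Plugging this $t$ into the bound yields $\sP\bigl(x \geq \ept{x}/\delta\bigr)\leq \delta$, so by complementation $\sP\bigl(x<\ept{x}/\delta\bigr)\geq 1-\delta$. Combining this with the nonnegativity $x\geq 0$ gives, with probability at least $1-\delta$, the two-sided bound $0\leq x\leq \ept{x}/\delta$ as stated.

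There is really no obstacle here: the only subtle point is justifying $t\,\mathbf{1}\{x\geq t\}\leq x$, which relies crucially on $x\geq 0$, and a brief note handling the edge cases $t=0$ or $\ept{x}=\infty$ (both trivial). Since the lemma is invoked as a reference (Equation 2.1 in \cite{wainwright2019high}), the write-up can be kept to a couple of lines, emphasizing the indicator inequality and the choice $t=\ept{x}/\delta$.
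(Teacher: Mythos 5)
Your proof is correct and is the standard indicator-function argument; the paper itself offers no proof, simply citing Wainwright (2019, Eq.~2.1), so there is nothing to diverge from. The instantiation $t = \ept{x}/\delta$ and complementation step exactly recover the second assertion as stated.
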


\begin{theorem}[Azuma-Hoeffding Inequality, Corollary 2.20 in \cite{wainwright2019high}]
\label{thm:ahi}
For a sequence of Martingale Difference Sequence random
variable $\set{D_l}_{l=1}^T$,
 if we have $D_l \in [a_l, b_l]$
almost sure for some constant $[a_l, b_l]$ and $l = 1, 2, \dots, T$, the summation $S_T := \sum_{l=1}^T D_l$, and let $\tilde{S}_T = \frac{\sum_{l=1}^T(b_l-a_l)^2}{8}$ then:
\begin{equation}
    \begin{aligned}
        \sP\paren{\abs{S_T} \ge t } \le 2e^{\frac{-t^2}{\tilde{S}_T}}
    \end{aligned}
\end{equation}
Equivalently, the moment-generating function satisfies
\beq
\label{azm}
    \ept{e^{\lambda S_T}} \le e^{\lambda^2\tilde{S}_T}.
\beq
Furthermore, if we choose $t = \sqrt{\frac{1}{2}\ln \frac{2}{\delta}\sum_{l=1}^T(b_l-a_l)^2}$, we get $\sP\paren{|S_T| > \sqrt{\frac{1}{2}\ln \frac{2}{\delta}\sum_{l=1}^T(b_l-a_l)^2}} \le \delta$.
\end{theorem}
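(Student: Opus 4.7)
The plan is to follow the standard Chernoff/martingale argument: first establish the moment generating function (MGF) bound, then convert it into a tail bound by optimizing a free parameter $\lambda$. The key enabling tool is Hoeffding's lemma, which states that a mean-zero random variable $X\in[a,b]$ satisfies $\mathbb{E}[e^{\lambda X}]\le e^{\lambda^2(b-a)^2/8}$ for every $\lambda\in\mathbb{R}$. Since $\{D_l\}$ is a martingale difference sequence, we have $\mathbb{E}[D_l\mid \mathcal{F}_{l-1}]=0$, and conditional on $\mathcal{F}_{l-1}$, $D_l$ is supported in $[a_l,b_l]$. Applying Hoeffding's lemma conditionally on $\mathcal{F}_{l-1}$ therefore gives
\begin{equation*}
\mathbb{E}\bigl[e^{\lambda D_l}\mid \mathcal{F}_{l-1}\bigr]\le \exp\!\bigl(\lambda^2 (b_l-a_l)^2/8\bigr)\quad\text{a.s.}
\end{equation*}

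Second, I would iterate this conditional bound using the tower property. Writing $e^{\lambda S_T}=e^{\lambda S_{T-1}}\cdot e^{\lambda D_T}$ and noting that $S_{T-1}$ is $\mathcal{F}_{T-1}$-measurable,
\begin{equation*}
\mathbb{E}\bigl[e^{\lambda S_T}\bigr]=\mathbb{E}\Bigl[e^{\lambda S_{T-1}}\mathbb{E}[e^{\lambda D_T}\mid\mathcal{F}_{T-1}]\Bigr]\le e^{\lambda^2(b_T-a_T)^2/8}\,\mathbb{E}\bigl[e^{\lambda S_{T-1}}\bigr].
\end{equation*}
Unrolling this recursion $T$ times yields $\mathbb{E}[e^{\lambda S_T}]\le \exp\!\bigl(\lambda^2\sum_{l=1}^T(b_l-a_l)^2/8\bigr)=e^{\lambda^2 \tilde S_T}$, which is exactly the claimed MGF bound.

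Third, I would convert this MGF estimate to a tail bound via the Chernoff trick: for any $\lambda>0$, Markov's inequality (Lemma~\ref{mkv}) applied to $e^{\lambda S_T}$ gives $\mathbb{P}(S_T\ge t)\le e^{-\lambda t}\mathbb{E}[e^{\lambda S_T}]\le \exp(-\lambda t+\lambda^2\tilde S_T)$. Optimizing over $\lambda$ (choosing $\lambda^\star=t/(2\tilde S_T)$) yields the one-sided sub-Gaussian bound $\mathbb{P}(S_T\ge t)\le \exp(-t^2/(4\tilde S_T))$. Applying the same argument to the MDS $\{-D_l\}$ and taking a union bound gives the two-sided statement $\mathbb{P}(|S_T|\ge t)\le 2\exp(-t^2/(4\tilde S_T))$, which (up to the indicated constants in the exponent in the stated form) is the claimed inequality.

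Finally, for the explicit high-probability statement, I would set the right-hand side equal to $\delta$ and solve for $t$: substituting $\tilde S_T=\sum_l(b_l-a_l)^2/8$ and inverting $2e^{-t^2/(4\tilde S_T)}=\delta$ gives $t=\sqrt{\tfrac{1}{2}\log(2/\delta)\sum_l(b_l-a_l)^2}$, matching the stated conclusion. I do not anticipate a serious obstacle here — the whole proof reduces to Hoeffding's lemma plus the tower property plus Chernoff — so the only care needed is in the constants, particularly in verifying that the exponent constant chosen in the definition of $\tilde S_T$ is consistent with both the MGF statement and the quantitative tail statement.
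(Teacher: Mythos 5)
Your argument is correct and is the standard proof of Azuma--Hoeffding: Hoeffding's lemma applied conditionally on $\gF_{l-1}$, the tower property to unroll the product, and a Chernoff bound optimized over $\lambda$. The paper itself gives no proof of this theorem --- it is cited directly as Corollary~2.20 of \cite{wainwright2019high} and used as a black box --- so there is no in-paper argument to compare against. One observation that your careful constant-tracking surfaces: your derivation yields the two-sided tail bound $\sP\paren{\abs{S_T}\ge t}\le 2e^{-t^2/(4\tilde{S}_T)}$ with $\tilde{S}_T=\sum_{l=1}^T(b_l-a_l)^2/8$, which is the form actually consistent with the MGF bound \eqref{azm} and with the ``furthermore'' clause, since inverting $2e^{-t^2/(4\tilde{S}_T)}=\delta$ gives exactly $t=\sqrt{\tfrac{1}{2}\ln\tfrac{2}{\delta}\sum_{l=1}^T(b_l-a_l)^2}$. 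The displayed inequality $\sP\paren{\abs{S_T}\ge t}\le 2e^{-t^2/\tilde{S}_T}$ in the theorem statement is off by a factor of $4$ in the exponent relative to what the Chernoff argument (and Wainwright's own statement) delivers; this is a typo in the statement rather than a gap in your proof. The pieces of the theorem that the paper actually invokes downstream --- the moment-generating-function bound \eqref{azm} in the proof of Corollary~\ref{ucad}, and the $\sqrt{\tfrac{1}{2}\ln(2/\delta)\sum_l(b_l-a_l)^2}$ deviation form in Theorem~\ref{azifortask} --- are precisely the ones your proof establishes with the correct constants.
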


\begin{theorem}[Freedman’s inequality, Theorem 1.6 in  \cite{freedman1975tail}]
\label{lm71}
    Let $\gF_T$ and  $\set{D_l}_{l\le T}$ follow the definition in Equation~\eqref{mtg}, and let $\abs{D_l} \le b$ with probability at least 1 and $\ept{D_l|\gF_{l-1}} = 0$. Let $S_T \coloneqq \sum_{l=1}^T D_l$ and $\tilde{S}_T \coloneqq \sum_{l=1}^T \ept{D_l^2|\gF_{l-1}}$. Then for any $\lambda \in [0, \frac{1}{b}]$
\beq 
\Ept{\set{D_l}_{l\le T}}{e^{\lambda S_T - \lambda^2 \tilde{S}_T}} \le 1
\beq 
\end{theorem}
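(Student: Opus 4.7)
The plan is to construct an exponential supermartingale $M_l \coloneqq \exp(\lambda S_l - \lambda^2 \tilde S_l)$ with $M_0 = 1$, and then use the tower property of conditional expectation to iteratively peel off terms from $M_T$ down to $M_0$, yielding $\Ept{}{M_T} \le 1$. This is the standard route for Freedman-type inequalities and relies on a single per-step moment-generating-function estimate: under the assumptions, for every $l$ and every $\lambda \in [0, 1/b]$,
\begin{equation*}
\Ept{}{e^{\lambda D_l} \,\big|\, \gF_{l-1}} \;\le\; \exp\!\bigl(\lambda^2 \Ept{}{D_l^2 \,|\, \gF_{l-1}}\bigr).
\end{equation*}

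To establish this per-step bound, I would invoke the elementary inequality $e^y \le 1 + y + y^2$ valid for $y \le 1$, which follows from noting that $f(y) = 1 + y + y^2 - e^y$ satisfies $f(0) = f'(0) = 0$ and $f''(y) = 2 - e^y \ge 0$ on $(-\infty, \ln 2]$. Under the assumption $\lambda \in [0, 1/b]$ and $|D_l| \le b$ almost surely, we have $\lambda D_l \in [-1, 1]$, so we may apply the inequality pointwise and take conditional expectations:
\begin{equation*}
\Ept{}{e^{\lambda D_l} \,\big|\, \gF_{l-1}} \;\le\; 1 + \lambda \Ept{}{D_l\,|\,\gF_{l-1}} + \lambda^2 \Ept{}{D_l^2\,|\,\gF_{l-1}} \;=\; 1 + \lambda^2 \Ept{}{D_l^2\,|\,\gF_{l-1}},
\end{equation*}
where the mean-zero martingale-difference property $\Ept{}{D_l\,|\,\gF_{l-1}}=0$ kills the linear term. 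The right-hand side is bounded by $\exp(\lambda^2 \Ept{}{D_l^2\,|\,\gF_{l-1}})$ via $1+x \le e^x$.

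With this in hand, I would finish by induction on $T$. Since $S_l$ and $\tilde S_l$ are both $\gF_l$-measurable, and in fact $\tilde S_l$ is $\gF_{l-1}$-measurable (it is a sum of conditional second moments up to index $l$, each of which is a function of $\gF_{l-1}$), we obtain
\begin{equation*}
\Ept{}{M_T \,\big|\, \gF_{T-1}} \;=\; M_{T-1} \cdot e^{-\lambda^2 \Ept{}{D_T^2\,|\,\gF_{T-1}}} \cdot \Ept{}{e^{\lambda D_T}\,\big|\,\gF_{T-1}} \;\le\; M_{T-1},
\end{equation*}
so $(M_l)$ is a nonnegative supermartingale with $M_0 = 1$. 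Taking unconditional expectations and iterating gives $\Ept{}{M_T} \le \Ept{}{M_{T-1}} \le \cdots \le \Ept{}{M_0} = 1$, which is the claim.

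The only delicate point, and the main thing to verify carefully, is the measurability of $\tilde S_l$ with respect to $\gF_{l-1}$ — this is what lets us pull $e^{-\lambda^2 \tilde S_T}$ outside the inner conditional expectation — together with the check that $\lambda D_l$ indeed falls in the domain $[-1,1]$ where the quadratic envelope of $e^y$ holds. Everything else is routine.
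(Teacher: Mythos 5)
Your proposal is correct and follows essentially the same route as the paper's proof: the identical per-step bound $\Ept{}{e^{\lambda D_l}\mid\gF_{l-1}}\le e^{\lambda^2\Ept{}{D_l^2\mid\gF_{l-1}}}$ via $e^y\le 1+y+y^2$ for $y\le 1$ together with the mean-zero property, followed by peeling off one term at a time using the $\gF_{T-1}$-measurability of $S_{T-1}$ and $\tilde S_T$; your supermartingale packaging of the recursion is only a cosmetic difference. The one tiny imprecision is that your convexity argument ($f''\ge 0$ on $(-\infty,\ln 2]$) only certifies the elementary inequality up to $y=\ln 2$, whereas you need it on all of $[-1,1]$ — this is easily patched by checking $f'>0$ on $[\ln 2,1]$, and the inequality does hold there.
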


\begin{proof}
\beq 
\label{eq78}
\Ept{D_T}{e^{\lambda D_T} | \gF_{T-1}} \labelrel{\le}{eq73} & \Ept{D_T}{1 + \lambda D_T + \lambda^2 D_T^2| \gF_{T-1}}\\
= & 1 + \lambda^2 \Ept{D_T}{D_T^2 | \gF_{T-1} } \\
\labelrel{\le}{eq75} & e^{\lambda^2 \Ept{D_T}{D_T^2 | \gF_{T-1}}}
\beq 
Where \eqref{eq73} holds since $e^x \le 1 + x + x^2$ for $0 < x \le 1$, thus, we require $\lambda D_T \le 1$, so $\lambda \le \frac{1}{b}$, and \eqref{eq75} holds by $1+x \le e^x$. 
Now we have

\beq 
\Ept{\set{D_l}_{l\le T}}{e^{\lambda S_T - \lambda^2 \tilde{S}_T}} &\labelrel={eq347} 
\Ept{\set{D_l}_{l\le T}}{e^{\lambda S_{T-1} - \lambda^2 \tilde{S}_{T-1} + \lambda D_T - \lambda^2 \ept{(D_T)^2|\gF_{T-1}}}}\\
& \labelrel={eq349} \Ept{\set{D_l}_{l\le T-1}}{e^{\lambda S_{T-1} - \lambda^2 \tilde{S}_{T-1}} \times \Ept{D_T}{e^{\lambda D_T}|\gF_{T-1}} \times e^{-\lambda^2\ept{(D_T)^2|\gF_{T-1}}}} \\
&\labelrel{\le}{eq90} \Ept{\set{D_l}_{l\le T-1}}{e^{\lambda S_{T-1} - \lambda^2 \tilde{S}_{T-1}}}\\
&\le ... \\ &\le 1.
\beq 

where \eqref{eq347} holds by the definition of $\tilde{S}_T$,  and \eqref{eq349} holds by the definition of the definition $D_T|\gF_{T-1}$,   
where \eqref{eq90} holds by \eqref{eq78}, and in the last step above we have recursively applied the above argument.  
\end{proof}

\begin{theorem}[Donsker$-$Varadhan’s Representation formula, \cite{donsker1983asymptotic}] 
\label{thmdvv}
Given a probability space $(\gX, \gB)$ and a bounded real-valued function $f$, where $f(x)$ is a measurable function $f:\gX \rightarrow \R$, $x$ is a random variable, and any two probability distributions $P_0$ and $P$ over $\gX$ (or, if $\gX$ is uncountably infinite, two probability density functions),   
\begin{equation}
\begin{aligned}
\mathbb{D}_{KL}\left(P \| \barb{P}\right)     \ge \Ept{P}{f(x)} - \ln\Ept{\barb{P}}{e^{f(x)}}.
\end{aligned}
\end{equation}

\end{theorem}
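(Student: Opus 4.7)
The plan is to prove the Donsker--Varadhan variational lower bound by the standard trick of introducing an auxiliary \emph{Gibbs tilting} of $\barb{P}$ by $f$ and then invoking the non-negativity of the KL divergence. First, I would dispose of degenerate cases: if $P$ is not absolutely continuous with respect to $\barb{P}$ then $\mathbb{D}_{KL}(P\|\barb{P})=\infty$ and the inequality is trivial, so I may assume $P\ll\barb{P}$. Because $f$ is bounded, $Z\coloneqq \Ept{\barb{P}}{e^{f(x)}}\in(0,\infty)$, so I may define a probability measure $Q$ on $(\gX,\gB)$ by
\begin{equation*}
\frac{dQ}{d\barb{P}}(x) \;=\; \frac{e^{f(x)}}{Z}.
\end{equation*}
This is the key construction: $Q$ is a valid probability measure equivalent to $\barb{P}$, and $P\ll\barb{P}\sim Q$ so $\mathbb{D}_{KL}(P\|Q)$ is well defined.

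The second step is an algebraic identity. Using the chain rule for Radon--Nikodym derivatives,
\begin{equation*}
\frac{dP}{dQ} \;=\; \frac{dP}{d\barb{P}}\cdot\frac{d\barb{P}}{dQ} \;=\; \frac{dP}{d\barb{P}}\cdot\frac{Z}{e^{f(x)}},
\end{equation*}
so taking logarithms and integrating against $P$ gives
\begin{equation*}
\mathbb{D}_{KL}(P\|Q)
\;=\; \mathbb{D}_{KL}(P\|\barb{P}) \;-\; \Ept{P}{f(x)} \;+\; \ln Z.
\end{equation*}
The third step is to invoke Gibbs' inequality $\mathbb{D}_{KL}(P\|Q)\ge 0$, which rearranges to exactly
\begin{equation*}
\mathbb{D}_{KL}(P\|\barb{P}) \;\ge\; \Ept{P}{f(x)} \;-\; \ln \Ept{\barb{P}}{e^{f(x)}},
\end{equation*}
as required. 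Equality is attained when $P=Q$, i.e.\ the tilted distribution $dP/d\barb{P}\propto e^{f}$, which is a useful by-product.

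There is essentially no hard obstacle here; the only subtlety is regularity: one must justify (i) that $Z<\infty$ so that $Q$ is a probability measure, which is immediate from boundedness of $f$ (or more generally from $\Ept{\barb{P}}{e^{f(x)}}<\infty$), and (ii) that the Radon--Nikodym manipulations are valid, which follows from $P\ll\barb{P}$. The non-negativity of KL in the final step is the only nontrivial analytic input and can itself be obtained from Jensen's inequality applied to $-\log$. No measurability or integrability issue beyond these arises, so the proof is essentially a one-line application of Gibbs' inequality after the right tilting is written down.
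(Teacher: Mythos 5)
Your proof is correct: the Gibbs tilting $dQ/d\barb{P}=e^{f}/\Ept{\barb{P}}{e^{f(x)}}$, the Radon--Nikodym chain rule identity $\mathbb{D}_{KL}(P\|Q)=\mathbb{D}_{KL}(P\|\barb{P})-\Ept{P}{f(x)}+\ln \Ept{\barb{P}}{e^{f(x)}}$, and non-negativity of KL give exactly the stated lower bound, and your handling of the degenerate case $P\not\ll\barb{P}$ and of the finiteness of the normalizer under bounded $f$ is the right level of care. The paper itself offers no proof of this statement --- it is quoted as a known auxiliary result with a citation to Donsker and Varadhan --- so there is nothing to compare against; your argument is the standard one and fills that gap correctly.
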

The $\ln\Ept{\barb{P}}{e^{f(x)}}$ on the right-hand side is the cumulant generating function. 
These lemmas have been commonly used in the theory of online learning ~\citep{pang2021reinforcement,kang2022efficient,yuan2022general,liu2021regret,kang2023robust}.
\section{Policy Function Parameter $\theta$ With A Gaussian Prior}

\subsection{Neural Network Parametrization}
\label{a300}
In Equation~\eqref{eq20}, the parameter $\theta$ can represent the weights of a neural network. Here, we provide details on how we set up the parameter updates for the neural network weights. Let $\theta = (w_r, b_r)$ denote the random weights and biases of the $r$-th ($r \in \sN_{\ge 1}$) network layer. Additionally, let $\epsilon_r$ and $\epsilon_{b_r}$ be multivariate standard normal distributed random variables. The random weights $w_r$ and biases $b_r$ are defined as follows:
\beq 
\label{eq54}
w_r = \mu_{r} \odot (1 + \gamma_r \epsilon_r),  \gamma_r = \ln(1+\exp{\delta_r}),
\beq 

\beq 
\label{eq55}
b_r = \mu_{b_r} \odot (1 + \gamma_{b_r} \epsilon_{b_r}),  \gamma_{b_r} = \ln(1+\exp{\delta_{b_r}}).
\beq 

This implies that $w_r$ and $b_r$ are multivariate normal distributed according to:

\beq 
w_r \sim \gN(\mu_{r}, \gamma_r^2 \diag{\mu_{r}^2})), \quad b_r \sim \gN(\mu_{b_r}, \gamma_{b_r}^2 \diag{\mu_{b_r}^2}).
\beq 

During optimization in each iteration, a sample of $w_r$ and $b_r$ is drawn from the random network parameters to perform gradient descent.

The indirect sampling according to Equations \eqref{eq54} and \eqref{eq55} ensures that the parameters $\mu_{r}, \gamma_r, \mu_{b_r}, \gamma_{b_r}$ can be updated. The normal prior $\barb{p}(\theta)$ is defined as:

\beq 
w_{\barb{r}} \sim \gN(\barb{\mu}_r, \barb{\gamma}_r^2 ), \quad b_{\barb{r}} \sim \gN(\barb{\mu}_{b_r}, \barb{\gamma}_{b_r}^2).
\beq 

Thus, the posterior distribution for the neural network is given as $p(\theta|D) = \barb{p}(\theta)p(D|\theta)/\int \barb{p}(\theta)p(D|\theta)d\theta$, where $p(D|\theta) \coloneqq p(g(\tau)|\theta)$ is the data likelihood. The exact likelihood function $p(D|\theta)$ and posterior policy $p(\theta|D)$ are left as future research, as mentioned in Section~\ref{post-prior-dist} "Posterior Distribution and Prior Distribution".

Instead of analytically deriving $p(\theta|D)$, we assume it belongs to a common distribution family of the prior, but with unknown parameters, which are updated by minimizing the upper bound. Therefore, we approximate the posterior $p(\theta|D)$ by a proposed distribution $q(\theta)$.

Following this approach, we can approximate the posterior $p(\theta|D)$ by updating the parameters of $q(\theta)$ using the indirect sampling chain rule. We first sample a $\theta \sim (\gN_{w_r}, \gN_{b_r})$, then evaluate the right-hand side in Equation~\eqref{eq9}. 

We can calculate the derivatives of $\hat{U}(P, \set{\gM_i}_{i\in[N]}, \set{\theta_{l-1,j}}_{j\in[M]}; \mu, \sigma,  \barb{\mu}, \barb{\sigma})$ with respect to $\mu_{r}, \mu_{b_r}$, $\delta_r, \delta_{b_r}$ and $\barb{\mu}_{r}, \barb{\mu}_{b_r}$, $\barb{\delta}_r, \barb{\delta}_{b_r}$ in Equations \eqref{eq54} and \eqref{eq55}, which we used in our implementation.

  \section{Environment and Experiment}
\subsection{Supplementary Experiment}
We conducted additional experiments to examine the influence of different \(\lambda\) values on the final rewards for selective environments as shown in Table~\ref{table_lambda}.

\begin{table*}[h]
\centering
\caption{Final Rewards for Different \(\lambda\) Values}
\footnotesize
\renewcommand{\arraystretch}{1.1}
\setlength{\tabcolsep}{6pt} 

\begin{tabularx}{\textwidth}{l *{6}{>{\centering\arraybackslash}X}}
\toprule
\cellcolor{paleaqua}\textbf{Experiments} & \(\lambda = 0.84\) & \(\lambda = 0.86\) & \(\lambda = 0.88\) & \(\lambda = 0.90\) & \(\lambda = 0.92\) & \(\lambda = 0.94\)  \\ 
\midrule
\cellcolor{paleaqua}HalfCheetah (bodyparts) & 162 (7.9)   & 176 (6.6)   & 181 (8.1)   & \textbf{192 (10.2)}  & 184 (9.8)   & 174 (9.5)  \\ 
\cellcolor{paleaqua}Hopper (bodyparts)      & 302 (18.1)  & 318 (14.2)  & 321 (20.9)  & \textbf{345 (14.5)}  & 328 (18.1)  & 312 (15.9) \\ 
\cellcolor{paleaqua}Walker (gravity)        & 305 (18.8)  & 321 (23.6)  & \textbf{357 (21.7)}  & 341 (24.4)  & 329 (26.3)  & 314 (21.5) \\ 
\cellcolor{paleaqua}Ant (Forward-Backward)  & -4.48 (0.4) & -4.17 (0.4) & -4.10 (0.3) & \textbf{-3.59 (0.3)} & -3.97 (0.4) & -4.28 (0.3) \\ 
\cellcolor{paleaqua}Swimmer (Uniform)       & 14 (1.9)    & 16 (1.7)    & \textbf{18 (1.1)}    & 17 (1.3)    & 16 (1.8)    & 15 (1.6)   \\ 
\cellcolor{paleaqua}Humanoid (Direction)    & 350 (16.8)  & 373 (12.4)  & \textbf{388 (17.1)}  & 386 (11.7)  & 377 (16.8)  & 362 (15.4) \\ 
\bottomrule
\end{tabularx}

\label{table_lambda}
\end{table*}


%

The results show that $\lambda$ controls the trade-off between exploration and exploitation. Larger $\lambda$ emphasizes exploration by increasing the KL penalty, while smaller $\lambda$ prioritizes exploitation but risks overfitting.   Note that $\lambda=0.9$ consistently produces good result, so it can be set as the default and fine-tuned around this value if needed.

\begin{table*}[ht!]
    \centering
    \caption{Different Lifelong Environments}
    \resizebox{\textwidth}{!}{%
    \begin{tabular}{c c p{8cm}<{\centering} c}
        \toprule
        \multirow{3}{*}{\cellcolor{paleaqua}CartPole-GMM} & cart mass & $0.15[\mathcal{N}(1,0.1^2)+0.15\mathcal{N}(5,0.1^2)] +0.18[\mathcal{N}(2,0.1^2)+0.18\mathcal{N}(4,0.1^2)] +0.34\mathcal{N}(3,0.1^2)$  \\
       \cellcolor{paleaqua} & pole mass &  $0.15[\mathcal{N}(0.4,0.01^2)+\mathcal{N}(0.5,0.01^2)] +0.18[\mathcal{N}(0.2,0.01^2)+\mathcal{N}(0.3,0.01^2)] +0.34\mathcal{N}(0.1,0.01^2)$     \\
       \cellcolor{paleaqua} & pole length & $0.15[\mathcal{N}(0.3,0.01^2)+\mathcal{N}(0.7,0.01^2)] +0.18[\mathcal{N}(0.4,0.01^2)+\mathcal{N}(0.6,0.01^2)] +0.34\mathcal{N}(0.5,0.01^2)$    \\
        \midrule 
       \cellcolor{paleaqua} \multirow{1}{*}{CartPole-Uniform} & cart mass &$\mathcal{U}(1,5) $ \\
     \cellcolor{paleaqua}  & pole mass &$\mathcal{U}(0.1,0.5)$ \\
     \cellcolor{paleaqua}  & pole length & $\mathcal{U}(0.3,0.7)$
        \\
        \midrule  
        \multirow{3}{*}{\cellcolor{paleaqua}LunarLander-GMM} & main engine power & $0.15[\mathcal{N}(11,0.1^2)+0.18\mathcal{N}(12,0.1^2)] +0.34[\mathcal{N}(13,0.1^2)+0.18\mathcal{N}(14,0.1^2)] +0.15\mathcal{N}(15,0.1^2)$   \\
       \cellcolor{paleaqua} & side engine power &  $0.15[\mathcal{N}(0.45,0.01^2)+0.18\mathcal{N}(0.55,0.01^2)] +0.34[\mathcal{N}(0.65,0.01^2)+ 0.18\mathcal{N}(0.75,0.01^2)] +0.15\mathcal{N}(0.85,0.01^2)$      
        \\
        \midrule 
        \cellcolor{paleaqua}  \multirow{1}{*}{LunarLander-Uniform} & main engine power & $\mathcal{U}(3,20) $ \\
        \cellcolor{paleaqua}& side engine power &  $\mathcal{U}(0.15,0.95)$ 
        \\
        \midrule
      \cellcolor{paleaqua} Swimmer-Uniform & movement direction &$\theta \sim \mathcal{U}(0, \pi)$ \\
      \midrule      
\cellcolor{paleaqua} Humanoid-Direction-Uniform    & movement direction &$\theta \sim \mathcal{U}(0, 2\pi)$  \\
        \midrule 
        \cellcolor{paleaqua} Ant-Direction-Uniform & goal direction & $\theta \sim \mathcal{U}(0, 2\pi)$ \\ 
        \midrule 
        \cellcolor{paleaqua} Ant-Forward-Backward-Bernoulli    & movement direction &$\theta \sim \text{Categorical}(0, \pi; 0.5)$    \\
        \midrule 
\cellcolor{paleaqua}  
        & HalfCheetah-gravity & Gravity sampled from $\mathcal{U}(0.5g,1.5g)$ \\
        \cellcolor{paleaqua} & HalfCheetah-bodyparts & Mass and size scaling of torso, thigh, leg $\sim \mathcal{U}(0.5,1.5)$ \\
\cellcolor{paleaqua}\multirow{1}{*}{Physics-based Variations}  
        & Hopper-gravity & Gravity sampled from $\mathcal{U}(0.5g,1.5g)$ \\
       \cellcolor{paleaqua}\multirow{1}{*}{See \cite{mendez2020lifelong, fu2022model}} & Hopper-bodyparts & Mass and size scaling of body parts $\sim \mathcal{U}(0.5,1.5)$ \\
       \cellcolor{paleaqua} & Walker-gravity & Gravity sampled from $\mathcal{U}(0.5g,1.5g)$ \\
       \cellcolor{paleaqua} & Walker-bodyparts & Mass and size scaling of body parts $\sim \mathcal{U}(0.5,1.5)$ \\
    \bottomrule
    \end{tabular}}    
    \label{tab:dist_dynamic_cartpole}
\caption*{\footnotesize Note: \( g \) represents the standard gravity acceleration (\( 9.81 \text{ m/s}^2 \)).}
\end{table*}

\subsection{OpenAI and MAMuJoCo Environment}
\label{openai mamujoco}

\begin{figure}[htb]
    \centering
    \begin{subfigure}{0.18\textwidth}
        \includegraphics[width=\textwidth, height=20mm]{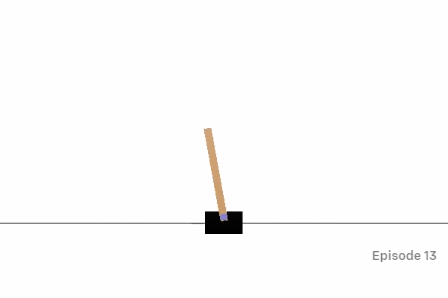}
        \caption{CartPole}
        \label{subfig:cart_env}
    \end{subfigure}\hfill
    \begin{subfigure}{0.18\textwidth}
        \includegraphics[width=\textwidth, height=20mm]{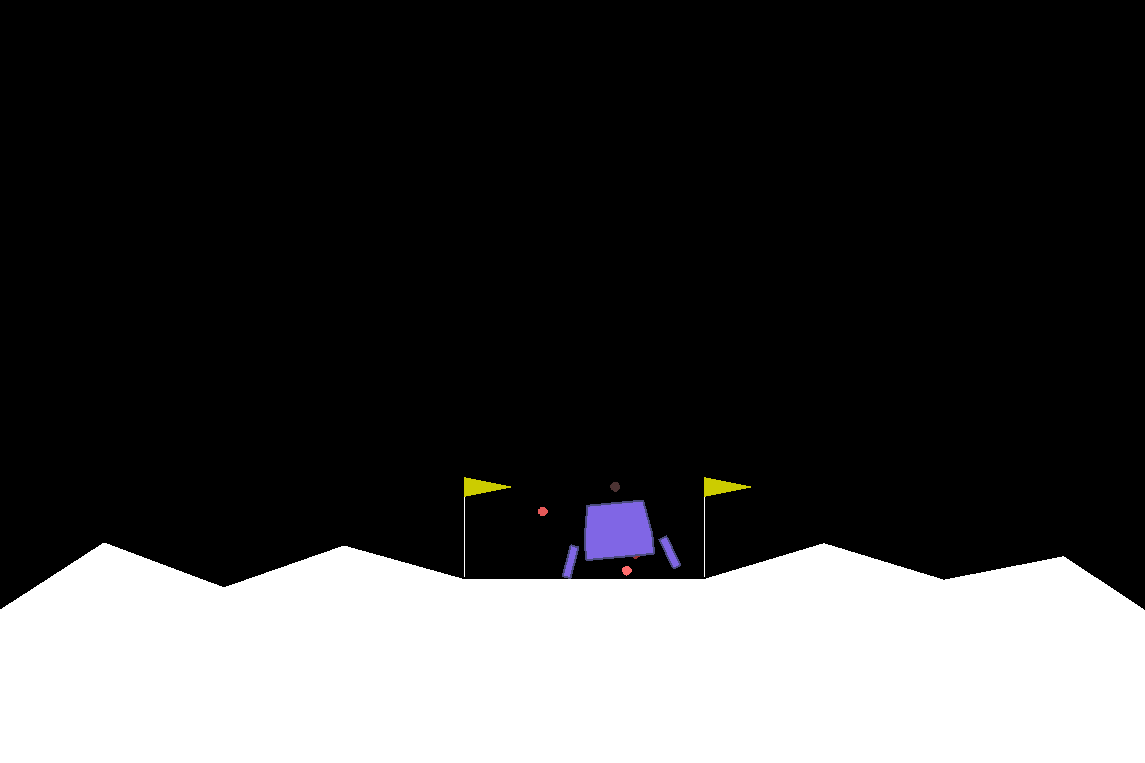}
        \caption{LunarLander}
        \label{subfig:lunar_env}
    \end{subfigure}\hfill
    \begin{subfigure}{0.18\textwidth}
        \includegraphics[width=\textwidth, height=20mm]{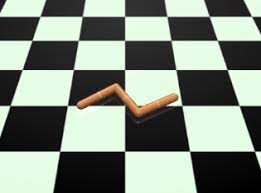}
        \caption{Swimmer}
        \label{subfig:swimmer_env}
    \end{subfigure}\hfill
    \begin{subfigure}{0.18\textwidth}
        \includegraphics[width=\textwidth, height=20mm]{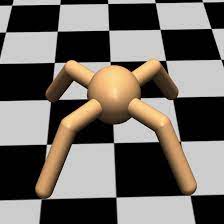}
        \caption{Ant}
        \label{subfig:ant_env}
    \end{subfigure} \\
    \begin{subfigure}{0.18\textwidth}
        \includegraphics[width=\textwidth, height=20mm]{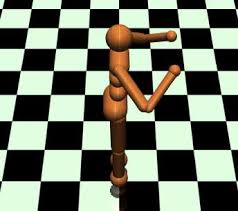}
        \caption{Humanoid}
        \label{subfig:human_env}
    \end{subfigure}\hfill
    \begin{subfigure}{0.18\textwidth}
        \includegraphics[width=\textwidth, height=20mm]{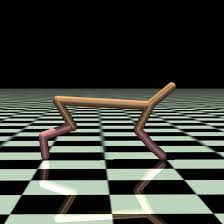}
        \caption{Cheetah}
        \label{subfig:cheetah_env}
    \end{subfigure}\hfill
    \begin{subfigure}{0.18\textwidth}
        \includegraphics[width=\textwidth, height=20mm]{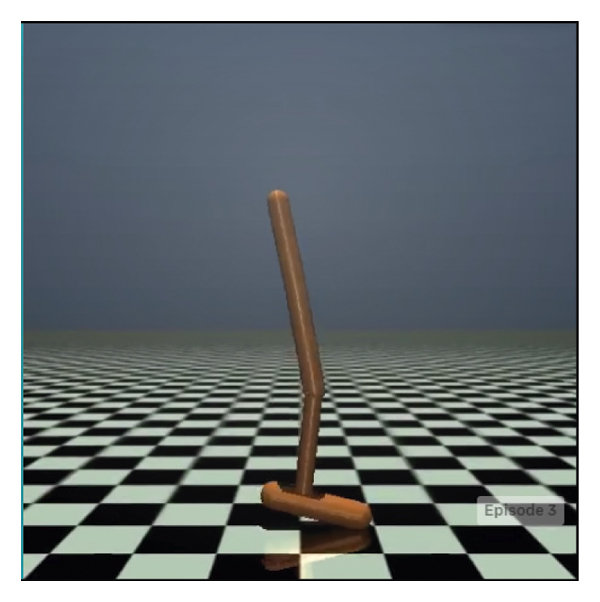}
        \caption{Hopper}
        \label{subfig:hopper_env}
    \end{subfigure}\hfill
    \begin{subfigure}{0.18\textwidth}
        \includegraphics[width=\textwidth, height=20mm]{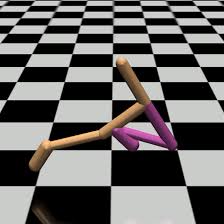}
        \caption{Walker}
        \label{subfig:walker_env}
    \end{subfigure}

    \caption{The illustration of environments. (a) CartPole, (b) LunarLander, (c) Swimmer, (d) Ant, (e) Humanoid, (f) Cheetah, (g) Hopper, (h) Walker}
    \label{fig:envs}
\end{figure}

We evaluate our method on various OpenAI Gym and MuJoCo-based lifelong RL environments (see Figure~\ref{fig:envs}), introducing structured variations in key dynamics to encourage continual adaptation. Table~\ref{tab:dist_dynamic_cartpole} summarizes these modifications.

\textbf{CartPole} and \textbf{LunarLander} modify mass, engine power, and length using Gaussian mixtures and uniform distributions. \textbf{Swimmer} and \textbf{Ant} introduce random movement directions, requiring adaptive control for diverse locomotion tasks. \textbf{HalfCheetah, Hopper, and Walker} adjust gravity and body morphology, simulating real-world physical variations that impact stability and efficiency.

\subsection{Hyper-Parameters}
\label{hyper}

Table~\ref{tab:epic_hyperpara} 
list hyperparameters used in EPICG. Among these hyperparameters, the frequency of lifelong update, i.e., $N$ is very important and closely related to the performance of both algorithm. Therefore, $N$ is choosen carefuly for each environment, whose values are shown in Table~\ref{tab:N_lifelong}. For hyperparameters of other methods, we use the original source code with parameters and model architectures
suggested in the original paper. We believe hyperparameters can further be tuned online and it will be our future work~\citep{ding2022syndicated,kang2024online}. The experiments were done in the GeForce RTX 2080 Ti GPU with 10 GB Memories.

\begin{table}[H]
    \centering
    \begin{tabular}{c|p{10cm}<{\centering}}
    \hline
    Hyperparameters & Values \\
    \hline
         taks ($K$) & 2000 or 1000 \\
         learning rate & $10^{-4}$ 
         \\$\beta$ & $10^{-4}$ 
         \\
         $N$ & chosen the best from $\{5, 10, 25, 50\}$\\
         initial value of $\lambda$ & 0.9 \\
         decay factor of $\lambda$ & 0.95 \\
     \hline
         
    \end{tabular}
    \caption{Hyparameters of EPICG}
    \label{tab:epic_hyperpara}
\end{table}

\begin{table}[H]
    \centering
    \begin{tabular}{cc}
    \hline
    Environments  & EPICG \\
    \hline
         Cartpole-GMM & \textcolor{black}{25}\\
         Cartpole-Uniform & \textcolor{black}{25}\\
         LunarLander-GMM & \textcolor{black}{25}\\
         LunarLander-Uniform & \textcolor{black}{25}\\
         Ant-Direction-Uniform & 
         \textcolor{black}{25}\\
         Ant-Forward-Backward-Bernoulli & \textcolor{black}{10}\\
         Swimmer-Uniform & 
          \textcolor{black}{25}\\
         Humanoid-Direction-Uniform & \textcolor{black}{10}\\
         HalfCheetah-gravity & \textcolor{black}{10}\\
         HalfCheetah-bodyparts & \textcolor{black}{10}\\
         Hopper-gravity & \textcolor{black}{25}\\
         Hopper-bodyparts & \textcolor{black}{25}\\
         Walker-gravity & \textcolor{black}{25}\\
         Walker-bodyparts & \textcolor{black}{25}\\
     \hline
         
    \end{tabular}
    \caption{Lifelong update frequency of EPICG}
    \label{tab:N_lifelong}
\end{table}

\end{document}